\documentclass[11pt]{article}

\usepackage[preprint]{acl}

\usepackage{times}
\usepackage{latexsym}

\usepackage[T1]{fontenc}

\usepackage[utf8]{inputenc}

\usepackage{microtype}

\usepackage{inconsolata}

\usepackage{graphicx}
\usepackage{amsmath}
\usepackage{amssymb}
\usepackage{mathtools}
\usepackage{amsthm}

\usepackage{booktabs}
\usepackage{tcolorbox}
\usepackage{multirow}
\usepackage{standalone}
\usepackage{makecell}
\usepackage[table]{xcolor}
\usepackage{algorithm}
\usepackage{algorithmic}
\usepackage{graphicx}
\usepackage{listings}
\usepackage{multicol}
\usepackage{graphicx}
\usepackage{subcaption}
\usepackage{siunitx}
\usepackage{microtype}
\usepackage{graphicx}
\usepackage{subcaption}
\usepackage{booktabs} 
\theoremstyle{plain}
\newtheorem{theorem}{Theorem}[section]

\theoremstyle{definition}
\newtheorem{definition}[theorem]{Definition}

\theoremstyle{remark}
\usepackage{amsmath}

\usepackage{hyperref}
\usepackage{booktabs}
\usepackage{tabularx}
\usepackage{enumitem}
\usepackage[textsize=tiny]{todonotes}

\usepackage[capitalize]{cleveref}

\title{Causal Probing for Internal Visual Representations in \\Multimodal Large Language Models}

\author{
    \mdseries
    Zehao Deng\textsuperscript{1,2$\dagger$}\thanks{Work done during Zehao Deng's visit at Shanghai Jiao Tong University and internship at Ant Group. $\dagger$ Equal Contribution. $\ddagger$ Corresponding authors. This work was supported by the Natural Science Foundation of Shanghai (24ZR1440300) and Ant Group Research Fund.}, 
    Tianjie Ju\textsuperscript{1$\dagger$}, 
    Zheng Wu\textsuperscript{1}, 
    Liangbo He\textsuperscript{2},\\
    Jun Lan\textsuperscript{2$\ddagger$},
    Huijia Zhu\textsuperscript{2},
    Weiqiang Wang\textsuperscript{2},
    Zhuosheng Zhang\textsuperscript{1$\ddagger$}
    \vspace{2mm} \\ 
    \textsuperscript{1}School of Computer Science, Shanghai Jiao Tong University \hspace{0.2em}
    \textsuperscript{2}Ant Group \\
    {\ttfamily\small fqyb1122@gmail.com, \{jometeorie,wzh815918208,zhangzs\}@sjtu.edu.cn} \\
    {\ttfamily\small \{liangbo.hlb, yelan.lj, huijia.zhj, weiqiang.wwq\}@antgroup.com}
}

\begin{document}
\maketitle

\begin{abstract}
Despite the remarkable success of Multimodal Large Language Models (MLLMs) across diverse tasks, the internal mechanisms governing how they encode and ground distinct visual concepts remain poorly understood. To unravel these mechanisms, we propose a causal framework based on activation steering to actively probe and manipulate internal visual representations. Through systematic intervention across four visual concept categories, our results reveal a divergence in concept encoding: entity knowledge is distinctively localized, whereas abstract concepts are globally distributed across the network. Critically, this divergence uncovers a mechanistic driver of scaling laws: increasing model depth is indispensable for encoding distributed and complex abstract concepts, whereas entities maintain a consistently high degree of localization. Furthermore, reverse steering uncovers that blocking explicit output triggers a surge in latent activations, exposing a compensatory mechanism between perception and generation. Finally, by extending our analysis to visual reasoning, we expose a disconnect between perception and reasoning: although MLLMs successfully recognize geometric relations, they treat them merely as static visual features, failing to trigger the procedural execution necessary for solving problems.
\end{abstract}

\section{Introduction}
Multimodal Large Language Models (MLLMs) have demonstrated remarkable performance across various tasks such as object recognition, image understanding, and multimodal reasoning~\cite{wu2023multimodal,yin2024survey,zhang2025igniting}. These successes raise a fundamental question: \textbf{How do MLLMs encode different visual concepts?} It is difficult to ascertain the underlying mechanism through common behavioral evaluation or correlation analysis, as correct outputs often hide a reliance on shallow heuristics rather than internal representations \cite{agrawal2018don,mccoy2019right,zhang2022paradox}. 

\begin{figure}[t]
    \centering
    \includegraphics[width=\linewidth,page=2]{fig/1.pdf}
    \caption{Illustration of our causal framework. We investigate the encoding of visual information by extracting concept vectors $\Delta$ through contrastive activation analysis between paired images (with concept and without concept). By injecting the vector $\Delta$ into the model during inference, we verify whether this intervention causally induces the corresponding internal awareness, thereby revealing the mechanism of visual representations of different concepts in MLLMs.}
    \label{fig:intro}
    \vspace{-0.5em}
\end{figure}

To rigorously understand the model's internal encoding mechanism, we address the challenge above by shifting from passive observation to causal intervention. We employ activation steering \cite{turner2023steering,zou2023representation} to directly manipulate the model’s internal states during inference. By extracting representations of concepts and injecting them into model's residual streams, we can measure the causal influence of these manipulations on the model's behavior.

\begin{table*}[h!]
    \centering
    \footnotesize
    \begin{tabularx}{\textwidth}{lXl}
        \toprule
        \textbf{Category} & \textbf{Explanation} & \textbf{Example} \\
        \midrule
        Entity & 
        Specific and discrete physical objects exist in the image. This is the most basic unit of visual perception, which usually has a clear spatial boundary. & 
        cat, apple \\
        \addlinespace 
        
        Visual Style & 
        Holistic and distributed visual attributes encompassing texture, tone, or artistic genre. These features pervade the entire image representation. & 
        cartoon, Picasso \\
        \addlinespace
        
        Emotion & 
        Affective semantics derived from visual cues. This category requires the model to bridge low-level perception with high-level sentiment analysis. & 
        happiness, disgust \\
        \addlinespace
        
        Abstract Concept & 
        High-level meanings lack direct counterparts. Recognizing these requires model to perform complex inference, mapping visual signals to broader semantic spaces. & 
        justice, danger \\
        \bottomrule
    \end{tabularx}
    \caption{Visual concept classification. We categorize visual concepts into four distinct types: Entity, Visual Style, Emotion, and Abstract Concept. The table defines each category along with examples used in our dataset.}
    \label{tab:classification}
\end{table*}

We first analyze the representational extent, sparsity, and distribution of different visual concepts in different MLLMs. Then we use concept erasure to test whether these vectors are causally necessary for generation. Finally, we investigate the abstract concepts underlying visual logical reasoning using geometric auxiliary lines explore the representation boundary of current MLLMs.

\paragraph{Key Insights} Through systematic intervention and evaluation, we reveal some key findings, summarized as follows:
\begin{itemize}[noitemsep, topsep=1pt,leftmargin=*]
    \item Entities exhibit distinct localization within specific layers, supporting the hypothesis that factual knowledge is encapsulated as explicit key-value pairs. In contrast, abstract concepts manifest as globally distributed representations, implying that their manipulation requires a holistic intervention strategy rather than local editing (\S \ref{sec:3.1}).
    \item Larger models expand the distribution of abstract concepts' representation, utilizing their increased depth to progressively encode complex semantics, which are deficient in small models. It can be an explanation for the emergence of advanced capabilities when models are scaled up (\S \ref{sec:3.1}).
    \item Different concepts exhibit distinct layer-wise distributions, for example, emotions are highly concentrated in the intermediate layers while abstract concepts skew towards terminal layers. Furthermore, these distributions exhibit intra-model-family similarity alongside inter-model-family divergence, indicating that structure and training strategy dictate the encoding patterns. (\S \ref{sec:3.2}).
    \item Reverse steering by subtracting concept can suppress explicit text generation, but it would trigger a compensation in latent activations to reconcile with visual evidence (\S \ref{sec:4}).
    \item While current MLLMs can recognize the geometric relations of auxiliary lines, they lack the intrinsic mechanism to recognize these visual features as signals for problem-solving. It highlights a disconnection between visual perception and logical reasoning (\S \ref{sec:5}).
\end{itemize}

\paragraph{}
In summary, our contributions are as follows:

\begin{itemize}[noitemsep, topsep=1pt,leftmargin=*]
    \item We curate a dedicated dataset that isolates specific visual conceptual differences, which provides a reliable foundation for causal analysis.
    \item We conduct a systematic analysis of MLLMs' internal representations by establishing a comprehensive evaluation framework. This framework evaluates causal effects to explore the encoding mechanism of different visual concepts.
    \item We identify distinct encoding mechanisms from representational extent, sparsity, and distribution for different visual concepts in MLLMs, and uncover the role of scaling laws in representing and understanding complex concepts.  
    \item We reveal current models' limitations in processing abstract concepts, especially visual logical reasoning, which may provide a direction for improving MLLMs' capabilities.
\end{itemize}

\begin{figure*}[t]
    \centering
    \includegraphics[width=0.98\linewidth,page=1]{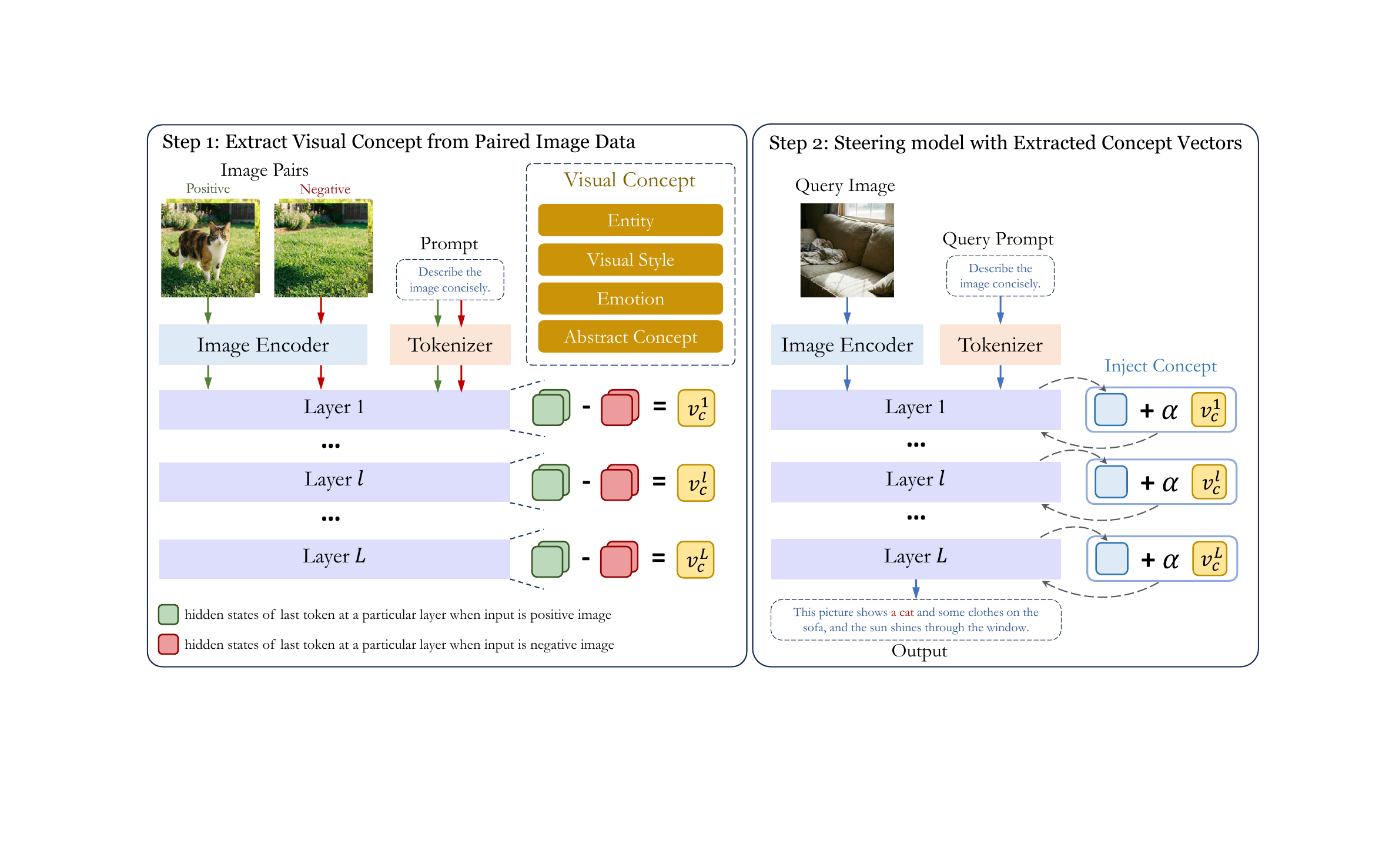}
    \caption{Overview of our activation steering methodology. In step 1, we extract the concept vector $v_c^{l}$ by inputting positive and negative image respectively, and computing mean shifts between hidden states. In step 2, this vector is injected independently into the model's residual stream to modulate the generation process.}
    \label{fig:model}
\end{figure*}

\section{Methodology}
\label{sec:2}
\subsection{Concept Classification}
We want to explore how MLLMs internally encode visual concepts. We define the visual concept as a semantic factor, which can be separated from the visual input of the representation layer, and intervene to induce consistent changes in the model output. Specifically, we classify visual concepts into the following four categories: Entity, Visual Style, Emotion, and Abstract Concept. Detailed explanation and examples are shown in \Cref{tab:classification}.

\subsection{Activation Steering}
Activation steering is based on the linear representation hypothesis \cite{park2023linear,nanda2023emergent,zou2023representation}. As shown in \Cref{fig:model}, activation steering mainly involves two stages: (i) extracting vectors of specific concepts from activation values, (ii) then steering the model's residual stream with concept vectors during inference, to guide the model to output the expected content.

\paragraph{Extract concept.} For a specific concept $c$, such as cat, we construct $N$ image pairs ($x_{c,i}^+$, $x_{c,i}^-$), $i \in \{1, ...,N\}$, where $x_{c,i}^+$ denotes positive image and $x_{c,i}^-$ denotes negative image. We compute the difference-in-means of the corresponding activations to get the target concept. Formally, we extract concept vector $v_c^{l} \in \mathbb{R}^{d_{model}}$ at the layer $l \in \{1,...,L\}$ as:
\begin{equation}
    v_c^{l} = \frac{1}{N} \sum_{i=1}^N{ \left( h^{l}(x_{c,i}^+)-h^{l}(x_{c,i}^-) \right) }, 
\end{equation}
where $h^{l}(x_{c,i}^+)$ represents the model's hidden states or residual stream activation value of the last token at layer $l$, with the image input $x_{c,i}^+$. Appendix \ref{app:proof}
provides a mathematical proof of why we use the simple arithmetic operation (differences-in-means) to extract optimal concepts.

\paragraph{Steering.} After extracting the target concept vector, we intervene in the model during forward propagation during inference to verify the causal efficacy of $v_c^{l}$ and probe the model's cognitive architecture. For a given query input $x_q$, we modify the internal activation $h^{l}(x_q)$ by injecting the steering vector scaled by a coefficient $\alpha$:
\begin{equation}
    h^{l}(x_q) = h^{l}(x_q) + \alpha \cdot v_c^{l}.
\end{equation}
By varying $\alpha$ and the injection layer $l$, and observing the shifts in the model's generation, we can quantify the functional role of different visual concepts within MLLM.

\begin{table*}[t]
\centering\small
\setlength{\tabcolsep}{7.2pt}
\resizebox{\textwidth}{!}{%
\begin{tabular}{ll w{c}{1.1cm} w{c}{1.1cm} w{c}{1.1cm} w{c}{1.1cm} S[table-format=2.1e2,retain-zero-exponent=true] w{c}{1.1cm}}
\toprule
\multirow{2}{*}{\textbf{Category}} & \multirow{2}{*}{\textbf{Model}} & \multicolumn{2}{c}{\textbf{Success Rate} $\uparrow$} & \multicolumn{2}{c}{\textbf{Semantic Similarity} $\uparrow$} & \multicolumn{2}{c}{\textbf{Logit Boost} $\uparrow$} \\
\cmidrule(lr){3-4} \cmidrule(lr){5-6} \cmidrule(lr){7-8}
 & & \textbf{Peak} & \textbf{DUI} & \textbf{Peak} & \textbf{DUI} & {\textbf{Peak}} & \textbf{DUI} \\
\midrule
\multirow{7}{*}{Entity} 
 & Qwen2.5-VL-7B & 0.637 & 0.051 & 0.461 & 0.022 & 5.0e10 & 0.946 \\
 & Qwen3-VL-8B & 0.610 & 0.089 & 0.371 & 0.020 & 5.7e7 & 0.910 \\
 & Qwen3-VL-32B & 0.647 & 0.083 & 0.412 & 0.017 & 2.3e11 & 0.979 \\
 & LLaVA-OneVision-1.5-8B & 0.395 & 0.092 & 0.373 & 0.041 & 2.5e7 & 0.845 \\
 & Gemma3-4B & 0.596 & 0.012 & 0.414 & 0.013 & 5.2e21 & 0.971 \\
 & Gemma3-27B & 0.639 & 0.099 & 0.418 & 0.031 & 8.1e17 & 0.943 \\
 \rowcolor{gray!10} \cellcolor{white}& \textbf{Average} & \textbf{0.587} & \textbf{0.071} & \textbf{0.408} & \textbf{0.024} &{-} & \textbf{0.932} \\
\midrule
\multirow{7}{*}{Visual Style} 
 & Qwen2.5-VL-7B & 0.215 & 0.253 & 0.443 & 0.059 & 3.9e5 & 0.964 \\
 & Qwen3-VL-8B & 0.291 & 0.164 & 0.444 & 0.048 & 8.5e10 & 0.970 \\
 & Qwen3-VL-32B & 0.562 & 0.157 & 0.582 & 0.040 & 2.5e10 & 0.967 \\
 & LLaVA-OneVision-1.5-8B & 0.118 & 0.488 & 0.449 & 0.053 & 2.1e9 & 0.970 \\
 & Gemma3-4B & 0.154 & 0.523 & 0.431 & 0.064 & 5.5e0 & 0.940 \\
 & Gemma3-27B & 0.200 & 0.443 & 0.560 & 0.053 & 2.5e10 & 0.981 \\
 \rowcolor{gray!10} \cellcolor{white} & \textbf{Average} & \textbf{0.257} & \textbf{0.338} & \textbf{0.485} & \textbf{0.053} &{-} & \textbf{0.965} \\
\midrule
\multirow{7}{*}{Emotion} 
 & Qwen2.5-VL-7B & 0.790 & 0.232 & 0.514 & 0.022 & 5.6e2 & 0.819 \\
 & Qwen3-VL-8B & 0.556 & 0.525 & 0.527 & 0.099 & 1.4e6 & 0.861 \\
 & Qwen3-VL-32B & 0.673 & 0.272 & 0.542 & 0.019 & 8.6e2 & 0.857 \\
 & LLaVA-OneVision-1.5-8B & 0.730 & 0.394 & 0.634 & 0.096 & 6.2e8 & 0.924 \\
 & Gemma3-4B & 0.900 & 0.306 & 0.637 & 0.020 & 9.6e5 & 0.971 \\
 & Gemma3-27B & 0.873 & 0.314 & 0.654 & 0.058 & 5.1e8 & 0.962 \\
 \rowcolor{gray!10} \cellcolor{white} & \textbf{Average} & \textbf{0.754} & \textbf{0.341} & \textbf{0.585} & \textbf{0.052} &{-}& \textbf{0.899} \\
\midrule
\multirow{7}{*}{\shortstack[l]{Abstract Concept}} 
 & Qwen2.5-VL-7B & 0.244 & 0.415 & 0.306 & 0.009 & 2.4e1 & 0.839 \\
 & Qwen3-VL-8B & 0.174 & 0.299 & 0.203 & 0.014 & 2.6e1 & 0.933 \\
 & Qwen3-VL-32B & 0.160 & 0.411 & 0.214 & 0.018 & 1.3e2 & 0.983 \\
 & LLaVA-OneVision-1.5-8B & 0.276 & 0.732 & 0.313 & 0.144 & 3.6e7 & 0.957 \\
 & Gemma3-4B & 0.304 & 0.307 & 0.463 & 0.020 & 9.2e-5 & 0.970 \\
 & Gemma3-27B & 0.360 & 0.410 & 0.477 & 0.024 & 2.2e6 & 0.972 \\
 \rowcolor{gray!10} \cellcolor{white} & \textbf{Average} & \textbf{0.253} & \textbf{0.429} & \textbf{0.329} & \textbf{0.038} & {-} & \textbf{0.942} \\
\bottomrule
\end{tabular}%
}%
\caption{Comparison of steering effects across varying categories and models. For each metric, \textbf{Peak} denotes the maximum value observed across all layers, and \textbf{DUI} denotes the Distribution Uniformity Index, which quantifies the representational span of the effect.}
\label{tab:main_results}
\end{table*}

\subsection{Experimental Setup}
\paragraph{Data.} The data needed in our experiment are mainly image pairs that can reflect the difference of target concepts. For entities, we use the instruction-guided image editing dataset Imgedit \cite{ye2025imgedit} and MagicBrush \cite{Zhang2023MagicBrush}, and build 10,000 image pairs through automated scripts. For style, we use the paired data Omniconsistency \cite{song2025omniconsistency}, which contains 22 visual styles. For emotion, we use Emoset \cite{yang2023emoset}, which not only covers the emotion of facial expression, but also involves various images such as objects, landscapes, scenes, etc. For the abstract concept, we select pictures from Pexels and construct a dataset containing 20 abstract concepts. More details can be found in Appendix \ref{app:data}.

\paragraph{Metrics.} We employ three complementary metrics to assess the intervention: \textbf{(i) Success Rate}, which measures the frequency with which the steering vector successfully elicits the explicit generation of the target concept. \textbf{(ii) Semantic Similarity}, which evaluates the semantic alignment between the generated text and the concept of injection by calculating sentence-BERT
scores. \textbf{(iii) Logit Boost}, which quantifies the magnitude of the causal intervention at the internal representation level. For each metric, we report the \textbf{Peak} value to capture the maximum effect achieved at a certain layer, and the \textbf{Distribution Uniformity Index (DUI)}, calculated as $\text{DUI}=1-\text{Gini}$, to quantify the spatial span of the representation. More details are provided in Appendix \ref{app:metric}. 

\paragraph{Evaluation.} To ensure robust generalization across different architectures and scales, we experiment with the following models: Qwen2.5-VL 7B \cite{bai2025qwen2}, Qwen3-VL 8B, 32B \cite{bai2025qwen3}, Gemma3 4B, 27B \cite{team2025gemma}, and LLaVA-OneVision-1.5 8B \cite{an2025llava}. We use $N=100$ image pairs to extract concepts by difference-in-means. We set the steering coefficient $\alpha=1$, which naturally applies the intrinsic magnitude of the semantic shift observed in the activation space. We test 5 thousand samples when steering and report their average results. The detailed list of prompts is provided in Appendix \ref{app:prompt}.

\section{Probing Internal Representations}
\label{sec:3}

\subsection{Analysis of Steering Effects}
\label{sec:3.1}
\Cref{tab:main_results} presents a systematic comparison of activation steering effects across four categories. We evaluate three dimensions: causality (Logit Boost), semantic (Semantic Similarity), and behavior (Success Rate). For each dimension, we measure the Peak value to evaluate the model's representational capacity and the DUI to determine whether the representation signal is distributed widely or concentrated in specific layers.

\paragraph{Peak Value.} 
We first examine the Peak values in Success Rate and Semantic Similarity to evaluate the representational strength of different concepts. As shown in \Cref{tab:main_results}, the data reveals a distinctive hierarchy where emotion achieves the highest representational upper bound. Specifically, emotion concepts consistently yield the strongest intervention effects, peaking at a Success Rate of 0.900 and a Similarity of 0.637 in Gemma3-4B. This high steerability suggests that \textbf{emotion features are encoded in highly separable latent subspaces}, allowing them to map directly to specific sentiment tokens. Appendix \ref{app:emo} further validates this fine-grained encoding of different emotional categories.

In stark contrast, \textbf{abstract concepts remain the representation bottleneck.} They consistently exhibit the weakest response, with Success Rates dropping as low as 0.160 in Qwen3-VL-32B and Similarity scores hovering around 0.214. This minimal steerability indicates that abstract semantics lack dedicated, highly-activated subspaces. Instead of being explicitly grounded in the perceptual stream like emotions, abstract concepts likely emerge from complex, late-stage feature aggregation, making their representations inherently resistant to interventions.

\paragraph{DUI.}
Analyzing the DUI reveals divergences in how MLLMs store different types of knowledge. \textbf{Entity knowledge is highly localized.} As shown in \Cref{tab:main_results}, entity concepts exhibit consistently low DUI averaging 0.071, with the Gemma3-4B model reaching an extreme concentration of 0.012. This high localization supports the hypothesis that entity knowledge, due to its concrete and discrete nature, is readily encapsulated as explicit key-value pairs within the feed-forward networks of specific layers \cite{meng2022locating}. Consequently, a steering vector functions as a precise key retrieval, which explains why methods like knowledge editing can successfully pinpoint specific knowledge neurons \cite{dai2022knowledge} for entities.

Conversely, \textbf{abstract concepts exhibit much broader, distributed representations across the network.} The DUI for abstract concepts reaches up to 0.429. This suggests that complex abstract semantics cannot be isolated into sparse key-value pairs, but are instead represented across the broader residual stream. This finding also offers a representational perspective on model depth: deeper networks do not necessarily localize entity knowledge better, but they provide more layers required to represent distributed abstract concepts.

\begin{figure*}[t]
    \centering
    \begin{subfigure}[b]{0.24\textwidth} 
        \centering
        \includegraphics[width=\linewidth]{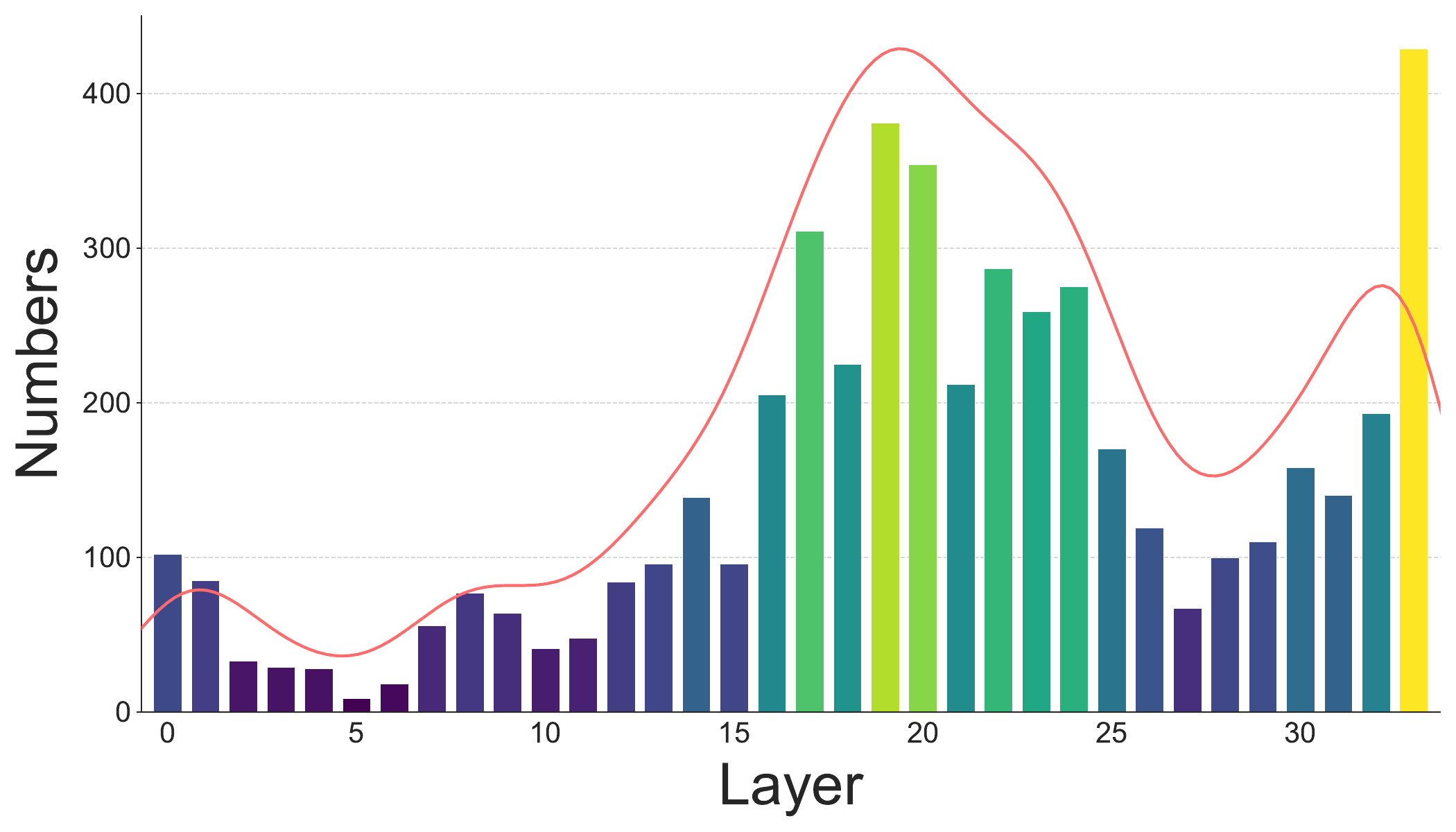} 
        \label{fig:row1-col1}
    \end{subfigure}
    \begin{subfigure}[b]{0.24\textwidth}
        \centering
        \includegraphics[width=\linewidth]{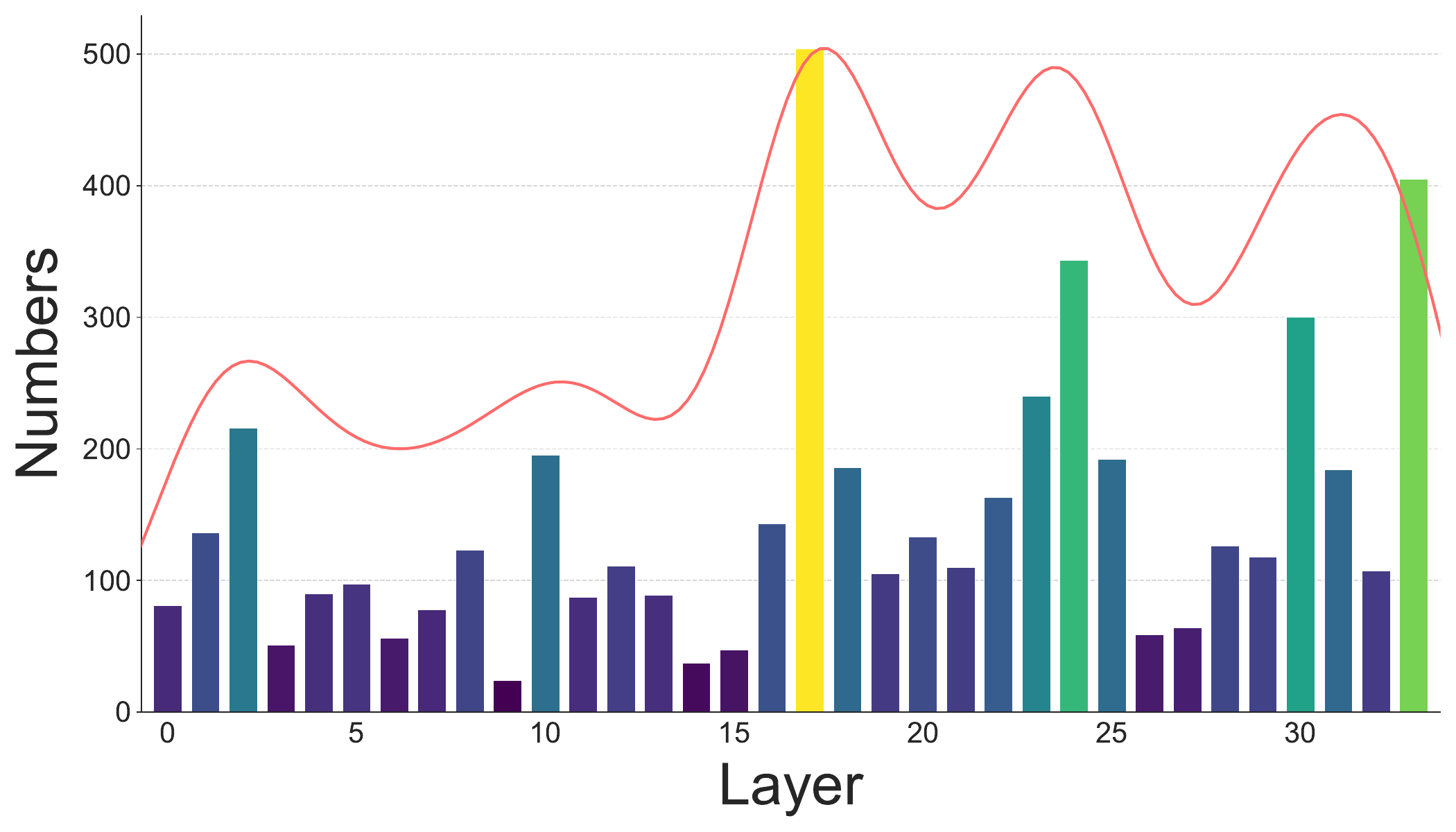}
        \label{fig:row1-col2}
    \end{subfigure}
    \begin{subfigure}[b]{0.24\textwidth}
        \centering
        \includegraphics[width=\linewidth]{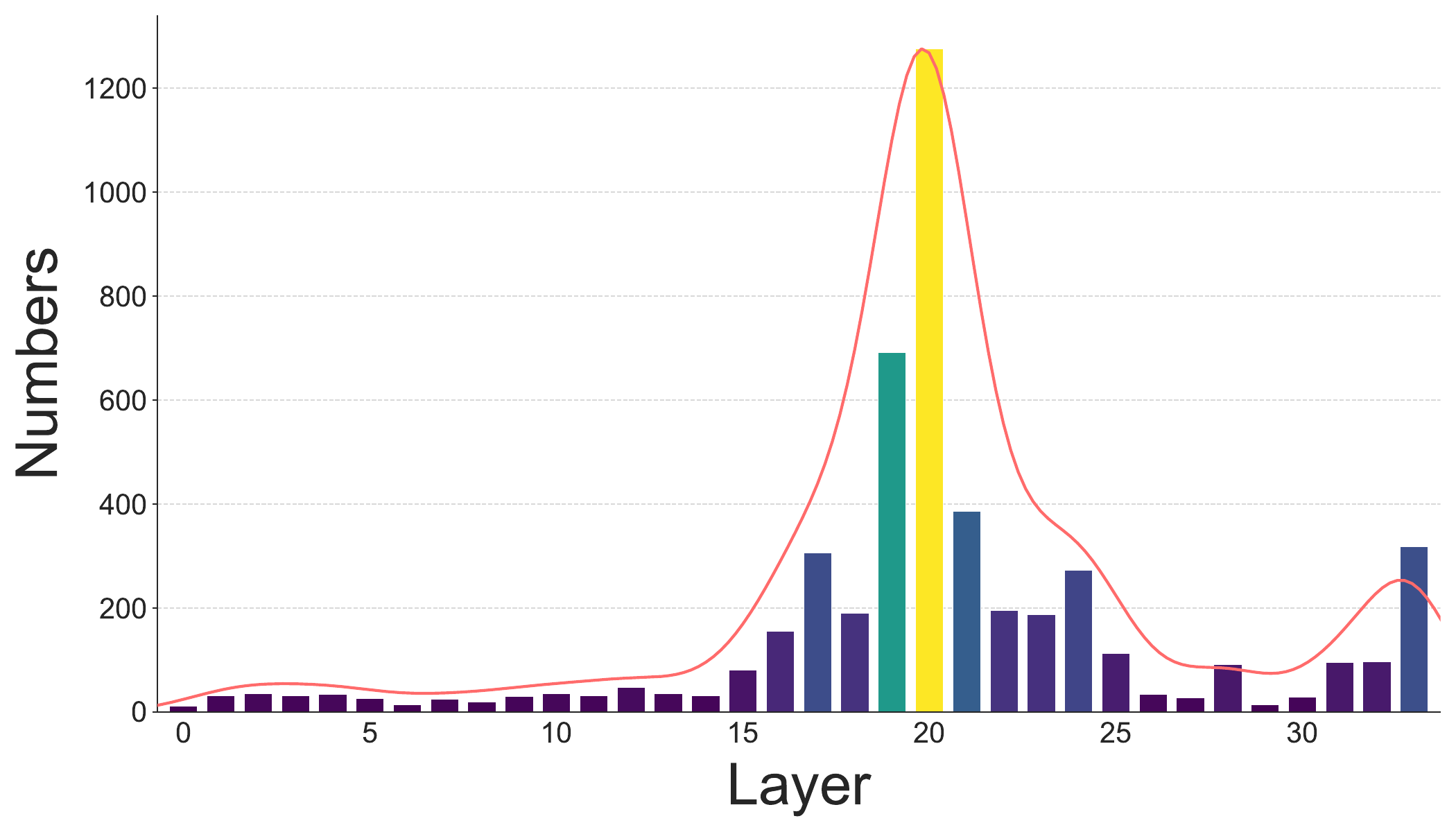}
        \label{fig:row1-col3}
    \end{subfigure}
    \begin{subfigure}[b]{0.24\textwidth}
        \centering
        \includegraphics[width=\linewidth]{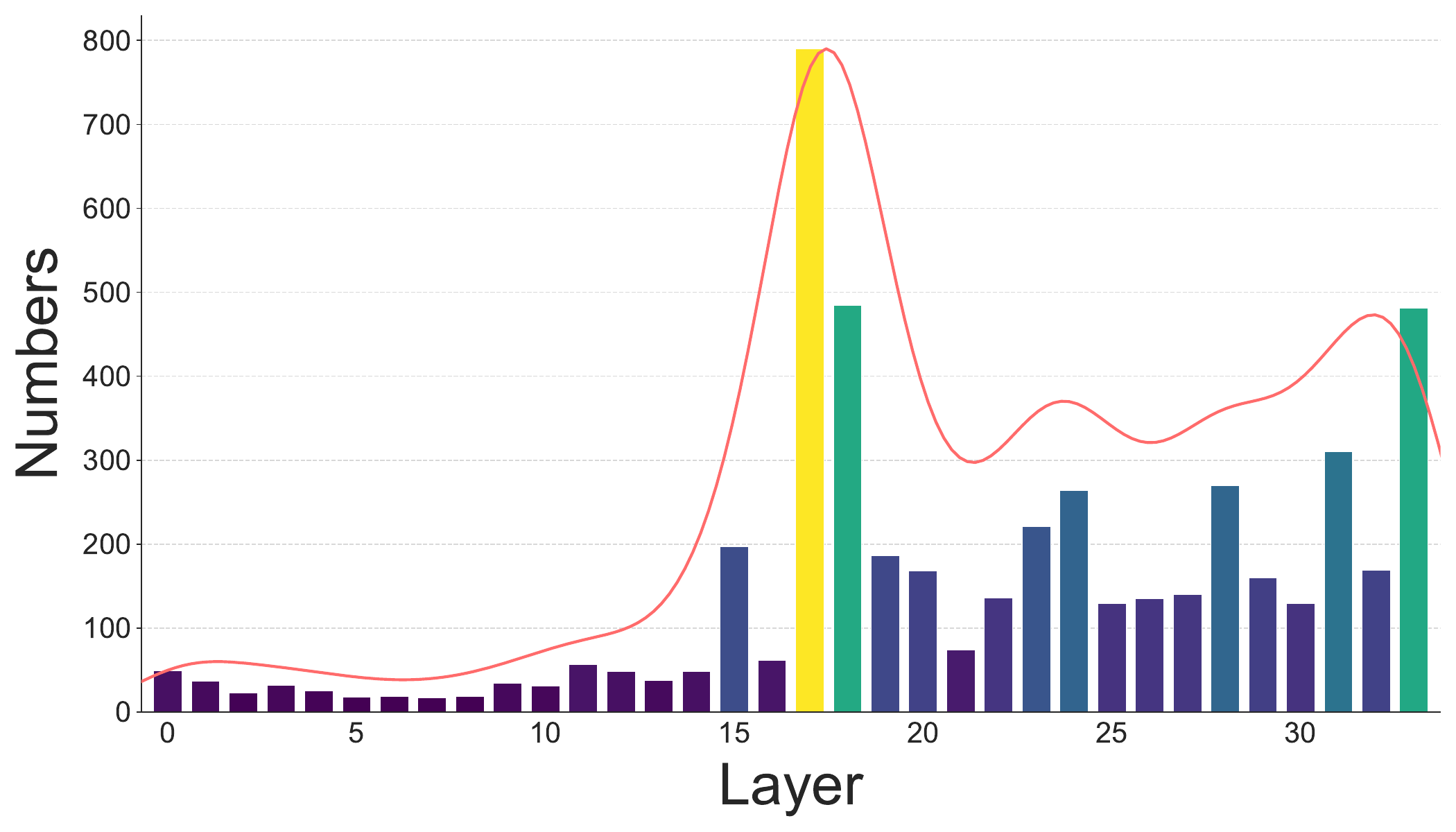}
        \label{fig:row1-col4}
    \end{subfigure}
    
    \vspace{0.5em} 
    
    \begin{subfigure}[b]{0.24\textwidth}
        \centering
        \includegraphics[width=\linewidth]{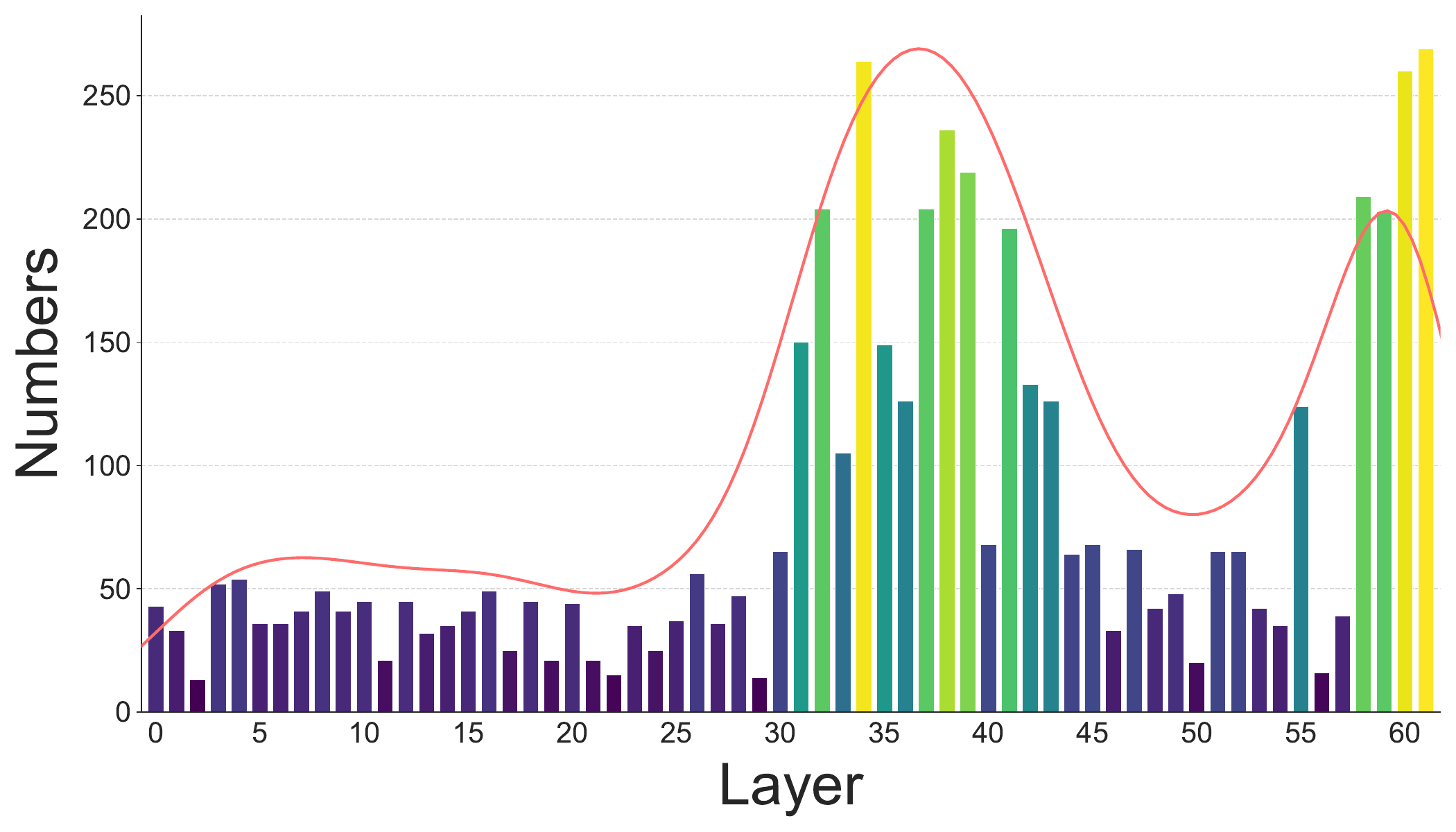}
    \end{subfigure}
    \begin{subfigure}[b]{0.24\textwidth}
        \centering
        \includegraphics[width=\linewidth]{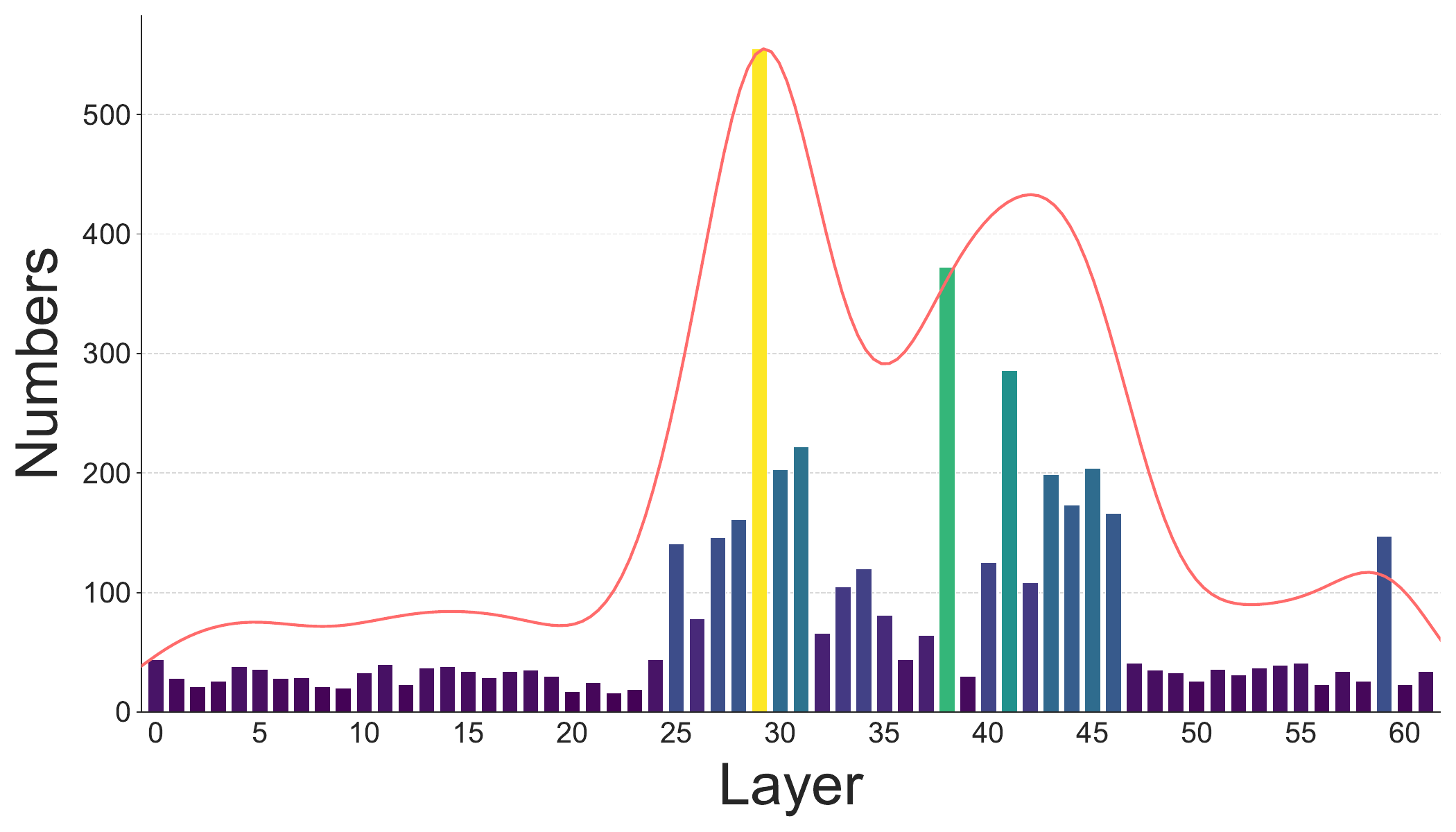}
    \end{subfigure}
    \begin{subfigure}[b]{0.24\textwidth}
        \centering
        \includegraphics[width=\linewidth]{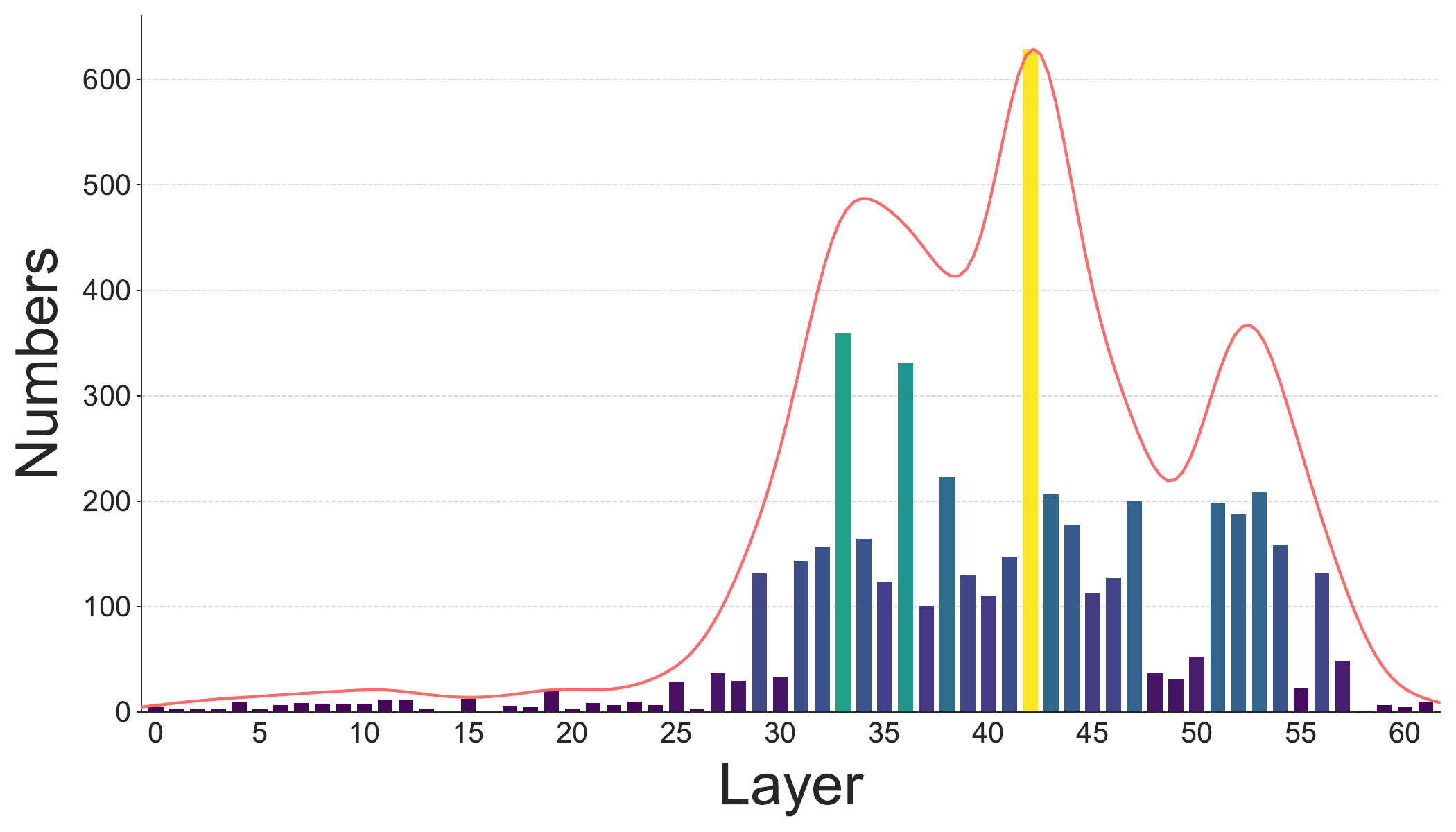}
    \end{subfigure}
    \begin{subfigure}[b]{0.24\textwidth}
        \centering
        \includegraphics[width=\linewidth]{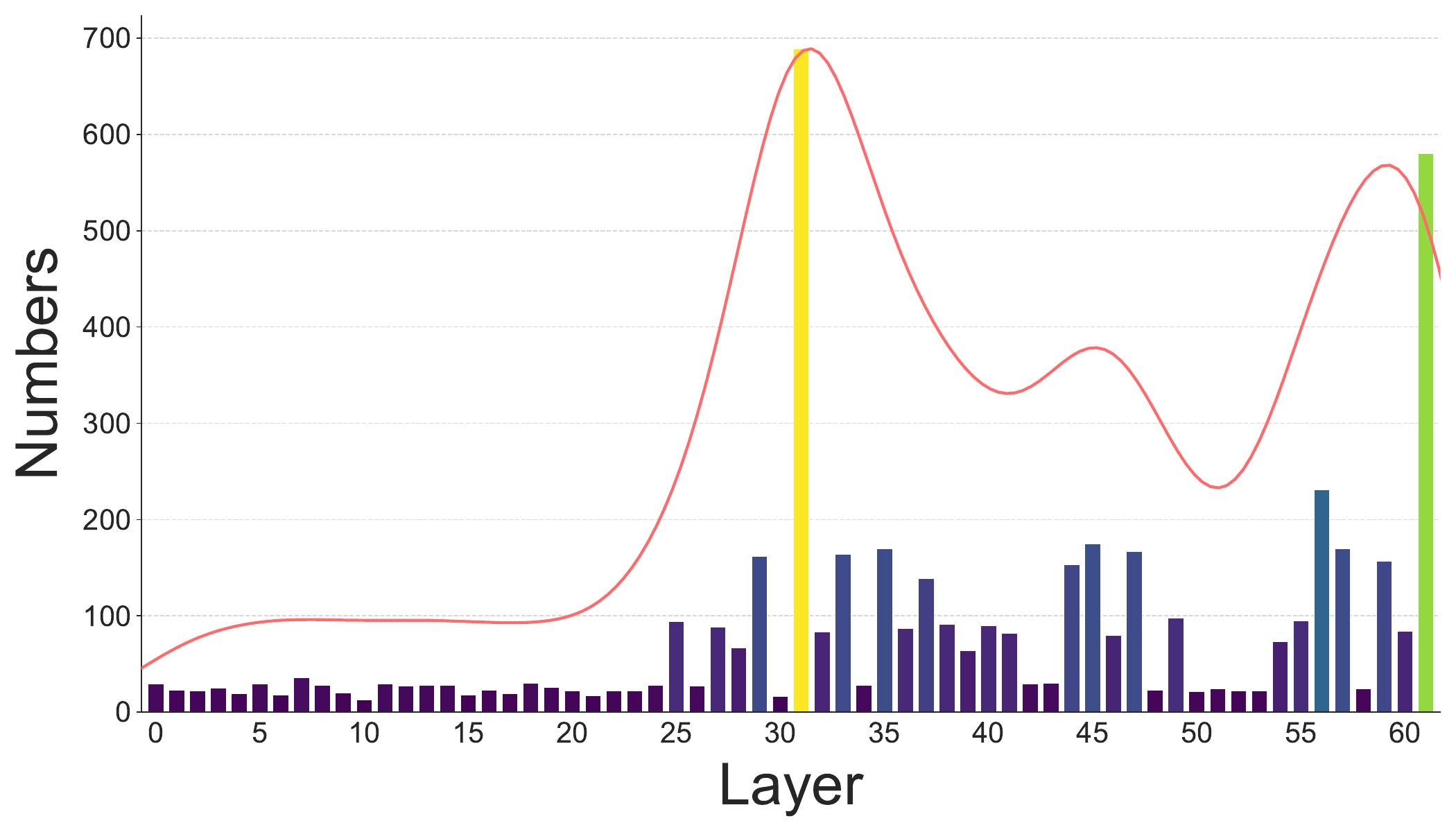}
    \end{subfigure}
    
    \caption{Layer-wise distribution of steering layers. The plots are organized by concept category (columns, left to right: entity, visual style, emotion, and abstract concept) and model scale (rows: top for Gemma3-4B, bottom for Gemma3-27B). Each histogram displays the number of samples that achieved their maximum semantic similarity score at a specific layer. The distributions reveal distinct cognitive signatures: entity shows extensive spatial span and a bimodal structure; visual style exhibits broad dispersion; emotion peaks in intermediate layers; and abstract concept is heavily skewed towards the final reasoning layers.}
    \label{fig:distribution}
\end{figure*}

\paragraph{Logit Boost.}
The Logit Boost metric provides further evidence for these divergent encoding strategies. We observe a massive disparity in intervention magnitudes across categories. Steering with entity concepts induces extreme logit shifts, ranging from $10^{10}$ to $5.2 \times 10^{21}$ in Gemma3-4B. This extreme scale difference suggests that entities are encoded in highly disentangled and localized subspaces within the model's residual stream, where a single directional intervention acts as a forceful control variable independent of other features. However, as analyzed in Appendix \ref{app:context}, extreme logit values do not guarantee explicit generation: entities often encounter decoding suppression due to contextual conflicts, whereas emotion concepts achieve higher Success Rates by smoothly integrating into the context.

In comparison, abstract concepts exhibit negligible boosts, representing a difference of over 10 orders of magnitude. This indicates that abstract concepts lack the dedicated, high-magnitude subspaces characteristic of concrete entities. Their generation relies not on forceful logical switches, but on subtle, accumulated shifts in the probability distribution across the vocabulary space.

\paragraph{Models Scaling.}
Finally, we investigate the impact of model scaling across varying architectures (e.g., the Qwen3-VL and Gemma3 families). We observe that increased model capacity generally yields higher peak steerability. This suggests that larger models develop more robust and well-isolated representations in their latent space, allowing them to handle visual features with reduced semantic interference. More importantly, scaling reveals a divergence in spatial encoding: while larger models tend to exhibit lower DUI for the first three categories (indicating tighter localization), they show significantly higher DUI for abstract concepts.

This finding indicates that larger models provide deeper layers to encode abstract semantics in a distributed manner. This phenomenon aligns with our earlier analysis. Smaller and shallower models likely lack sufficient layers to support the sequential encoding of these complex abstract concepts. In contrast, deeper networks naturally distribute abstract concepts across their extended layers. This offers an explanation for the emergence of advanced capabilities in scaled models: their advantage over smaller models lies not in the stronger memorization of simple knowledge, but in \textbf{utilizing deeper layers to process complex concepts}.

\subsection{Layer-wise Mechanism Analysis}
\label{sec:3.2}

To explore the layer-wise processing mechanism, we calculate the number of samples peaking at each layer and visualize the distribution of steering layers for each category across the Gemma3 4B and 27B models, as shown in \Cref{fig:distribution}. These distributions reveal distinct structural signatures that characterize the layer-wise encoding dynamics of different visual concepts.

\paragraph{Distributions of categories.}
Entity representations exhibit a distinctive bimodal localization. Unlike other categories that are confined to specific regions, entity concepts are confined to specific functional regions. As for bimodality, we attribute the middle peak to visual perception, where the model directly recognizes concrete object features. In contrast, the final peak reflects that this entity concept undergoes continuous accumulation and refinement throughout the network depth. We provide a discussion in Appendix \ref{app:2peak}, demonstrating that \textbf{this bimodal divergence stems from a distinct information stream}.

Visual styles display the broadest and most uniform dispersion across the network. This suggests that stylistic information is not restricted to specific layers or stages; rather, \textbf{it functions as a continuous bias vector} in the residual stream, modulating the generation trajectory across all layers. 

Emotions are highly concentrated in the intermediate layers. This indicates that affective features are primarily mapped to the textual latent space during the intermediate computational phase.

Abstract Concepts exhibit a heavily right-skewed distribution, with optimal intervention points clustered predominantly in the late-stage layers. This distribution confirms that abstract semantics are not directly grounded in the early layers' perceptual stream. Instead, they are synthesized \textbf{as high-level semantic representations in the late layers}, immediately prior to generation.

\paragraph{Architectural Determinants.}
The layer-wise distribution exhibits a critical pattern: \textbf{intra-family similarity alongside inter-family divergence}. Comparing the 4B and 27B parameter scales within the Gemma family reveals that these spatial distribution patterns remain consistent despite the significant difference in capacity. However, these distributions vary considerably across different model families (shown in Appendix \ref{app:arc}). 

This phenomenon indicates that the layer-wise distribution of visual concepts is \textbf{determined by the specific training strategy and structural configuration}. Models within the same family share identical vision encoders, projection modules, and pre-training objectives. Consequently, scaling up parameters expands the network's capacity but preserves the intrinsic computational sequence of feature processing. In contrast, the inter-family divergence demonstrates that different architectural designs alter how and when visual features are integrated into the textual space.

\begin{figure*}[t]
    \centering
    \begin{subfigure}[b]{0.32\textwidth}
        \centering
        \includegraphics[width=\linewidth]{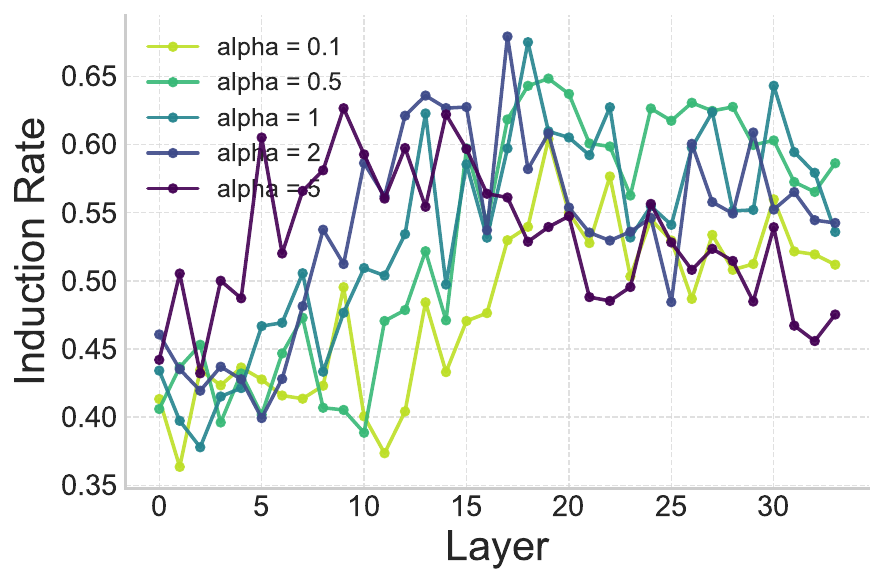} 
    \end{subfigure}
    \begin{subfigure}[b]{0.32\textwidth}
        \centering
        \includegraphics[width=\linewidth]{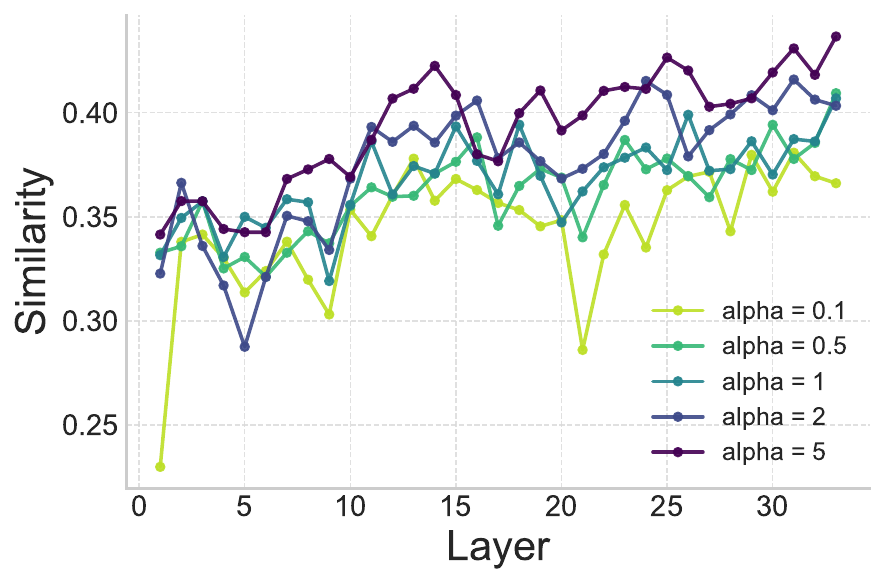}
    \end{subfigure}
    \begin{subfigure}[b]{0.32\textwidth}
        \centering
        \includegraphics[width=\linewidth]{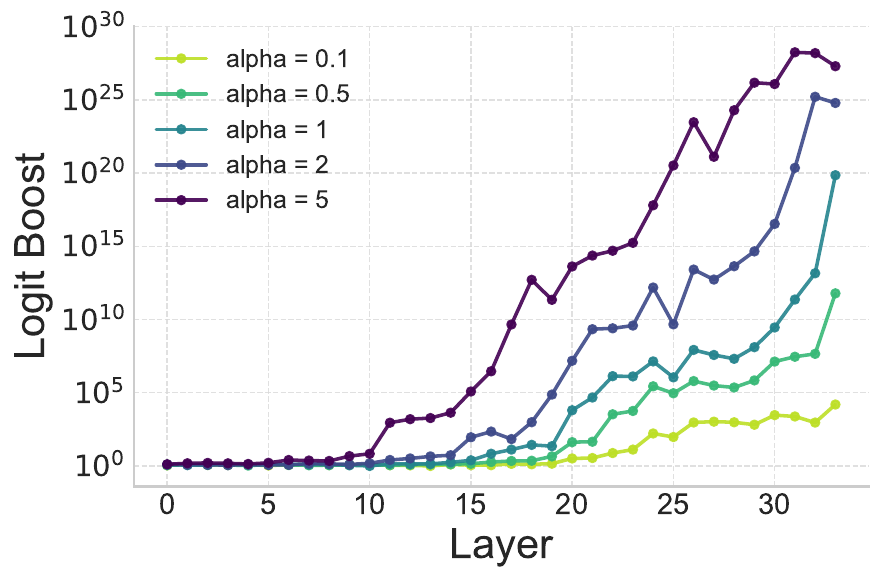}
    \end{subfigure}
    
    \caption{Sensitivity analysis of the steering coefficient $\alpha$. Overall, the metrics demonstrate a positive correlation with both layer and intensity $\alpha$. Similarity and Logit Boost scale monotonically with $\alpha$, indicating that stronger intervention increasingly aligns internal states with the target concept. While a higher $\alpha$ maximizes semantic similarity and logit values, it leads to a decline in the Success Rate at deeper layers due to extremely high logit.}
    \label{fig:alpha}
\end{figure*}

\subsection{Sensitivity Analysis}
To study the influence of intervention strength, we conducted an ablation study on the steering coefficient $\alpha$. This parameter controls the magnitude of the injected concept vector, representing a trade-off between semantic efficacy and the preservation of the model's original activation space.

While moderate intervention enhances control, excessive strength disrupts the generation process. As illustrated in \Cref{fig:alpha}, the Success Rate exhibits a non-linear response to intervention strength. Increasing $\alpha$ from 0.1 to 1 yields significant performance gains and establishes an optimal plateau. Pushing the coefficient to an extreme value (e.g., $\alpha=5$) results in a distinctive inverted-U trajectory. Although extreme intervention achieves high success rates when applied at intermediate layers, the performance collapses significantly when injected into the terminal layers.

Comparing the metrics in \Cref{fig:alpha} reveals a critical divergence between overall semantic alignment and explicit generation. Unlike the Success Rate, Semantic Similarity scales monotonically with $\alpha$, with the $\alpha=5$ setting consistently achieving the highest scores. This contradiction indicates a state of semantic over-saturation: extreme intervention vectors dominate the residual stream, and their disproportionate magnitude overrides the related features necessary for coherent text generation. Consequently, the output space is heavily saturated with the target concept, but the network fails to execute the precise token sequences.

The underlying mechanism of this instability is elucidated by the Logit Boost analysis in \Cref{fig:alpha}. The intervention induces an exponential increase in token logits. While $\alpha=1$ elevates logits to a robust but manageable range, $\alpha=5$ drives them to extreme magnitudes exceeding $10^{25}$. Such \textbf{strong intervention destroys the probability distribution necessary for decoding}, likely trapping the model in repetition loops or incoherent degeneration.

\section{Reverse Steering}
\label{sec:4}
While previous positive steering demonstrated the sufficiency of these vectors in inducing concepts, establishing a robust causal link requires verifying that the generation process causally depends on these specific representations. To this end, we conduct reverse steering, a counterfactual intervention where the target concept vector is subtracted from the residual stream during inference ($h^{l} \leftarrow h^{l} - \alpha v_c^{l}$). By observing the model's response to this targeted ablation, we reveal a fundamental causal asymmetry and a distinct dissociation between internal activation and external behavior. \looseness=-1

\begin{figure}[h!] 
    \centering

    \begin{subfigure}{0.48\linewidth} 
        \centering
        \includegraphics[width=\linewidth]{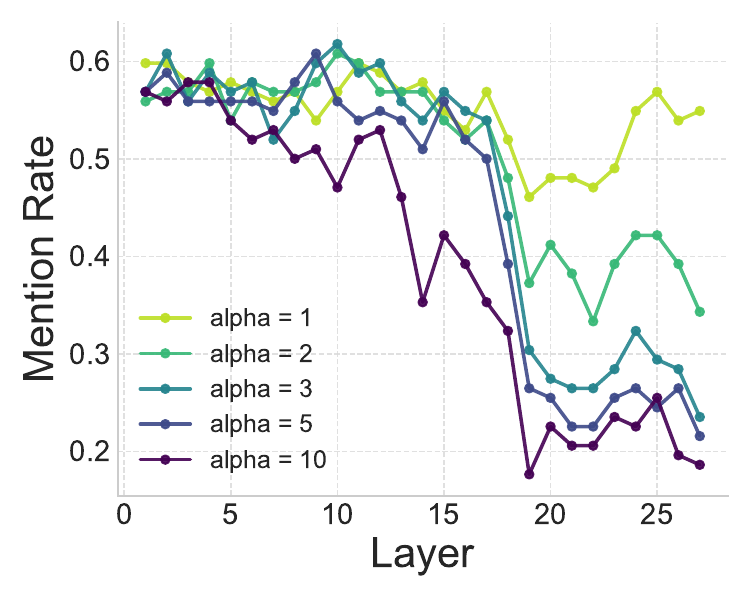} 
    \end{subfigure}
    \begin{subfigure}{0.48\linewidth}
        \centering
        \includegraphics[width=\linewidth]{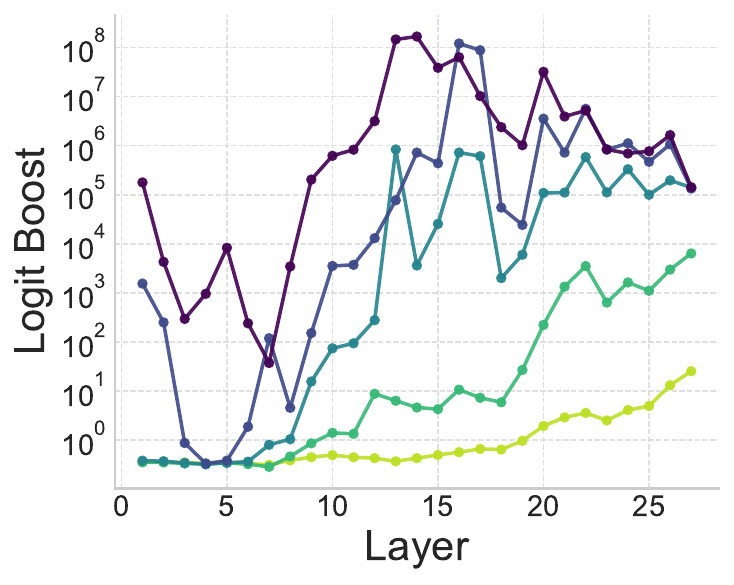}
    \end{subfigure}
    
    \caption{Analysis of reverse steering. \textbf{Left}: The intervention demonstrates high efficacy in suppressing target entities, with the Mention Rate dropping precipitously in deeper layers, confirming the vector's role as a necessary gating factor. \textbf{Right}: Despite the behavioral suppression, the internal Logit Boost for the target token exhibits an anomalous exponential increase. This phenomenon indicates that the model attempts to compensate for suppressed residual information by over-attending to the persistent visual features from image.}
    \label{fig:reverse}
\end{figure}

\paragraph{Behavioral Suppression.}
As our purpose is to suppress the model's representation, we employ Mention Rate, the frequency with which the target entity still appears in the generated response, as the primary behavioral metric. As shown in \Cref{fig:reverse}, the Mention Rate drops significantly during reverse steering, indicating that we successfully severed the link between the visual input and the textual output. Crucially, this suppression experiment establishes necessity: without the specific representation in the residual stream, the network fails to decode the corresponding tokens, even though the raw visual features remain intact in the cross-attention input.

\paragraph{Conflict and Overcompensation.}
Intuitively, ablating a concept vector should diminish its activation levels. However, \Cref{fig:reverse} reveals a stark divergence between the decoding behavior and its internal logit values. While the reverse steering successfully prevents the explicit generation of the target word, the logit values for that specific token exhibit an anomalous surge, reaching magnitudes of $10^6$ to $10^8$ in the deeper layers.

We attribute this extreme latent activation to a feature conflict between the residual stream and the cross-attention mechanism. Although the intervention forcefully removes the target's semantic encoding from the residual stream, the image encoder continuously injects visual evidence via the cross-attention layers. This creates a representational mismatch where the visual stream signals the object's presence despite its absence in the residual stream. Consequently, attention mechanisms likely overcompensate by allocating disproportionate weights to visual patches in a futile attempt to retrieve the missing concept.

\section{Visual Logical Reasoning}
\label{sec:5}
In the previous section, we demonstrated that MLLMs struggle to align abstract concepts with visual features. We want to further explore whether the model can understand visual logical reasoning, a more abstract concept. Specifically, we investigated the internal representation of visual logical reasoning using geometry auxiliary lines \cite{fu2025geolaux} as a case (examples are shown in Appendix \ref{app:geo}). We extracted an ``auxiliary line" concept vector by contrasting original geometry images with corresponding images containing auxiliary lines. To evaluate the intervention effect, we categorize the target words into relationship words (e.g., `parallel', `perpendicular') and action words (e.g., `connect', `extend'). The results are presented in \Cref{fig:logic_alpha}.

\begin{figure}[h!] 
    \centering
    
    \begin{subfigure}{0.48\linewidth} 
        \centering
        \includegraphics[width=\linewidth]{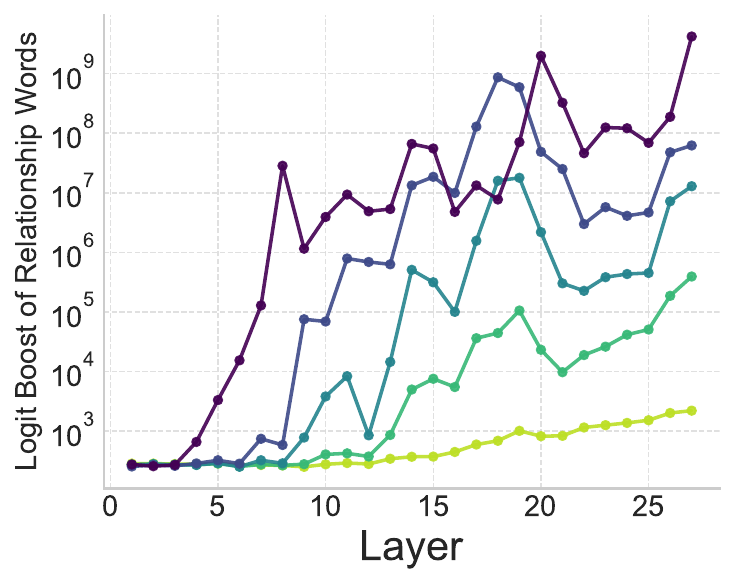} 
    \end{subfigure}
    \begin{subfigure}{0.48\linewidth}
        \centering
        \includegraphics[width=\linewidth]{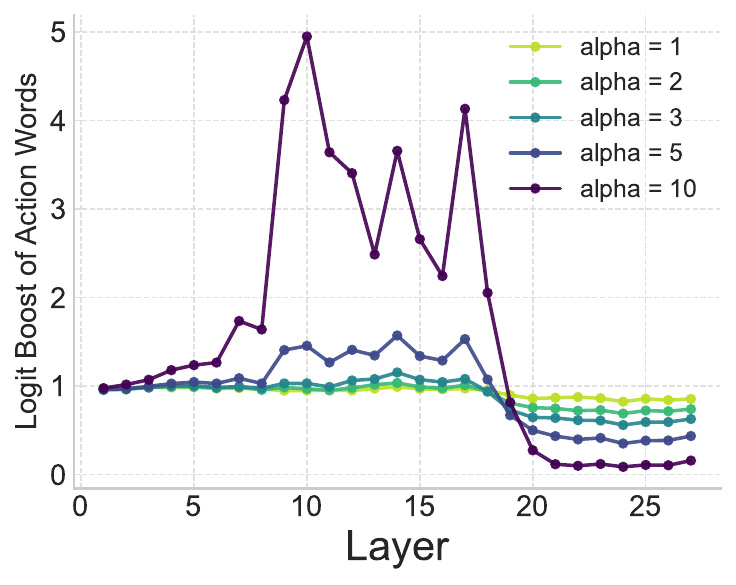}
    \end{subfigure}
    
    \caption{Contrast in Logit Boost between relationship words and action words. The results show that while the steering vector successfully activates descriptions of geometric states, it fails to trigger the corresponding action terms. This suggests that the MLLM perceives auxiliary lines as passive visual signals rather than active logical helpers for solving problems.}
    \label{fig:logic_alpha}
\end{figure}

\textbf{Steering activates perception of geometric meanings but fails to trigger reasoning action.} The results for relationship words follow the same positive trend observed in previous categories, indicating that the model indeed perceives the geometric meanings of the auxiliary lines. It also confirms that the extracted vectors effectively encode the geometric relationships introduced by auxiliary lines. However, action words fail to exhibit a corresponding logit increase and even show a declining trend. This dissociation uncovers a critical gap: although the injection of auxiliary vectors alters the description of geometric meanings, it fails to trigger the action tokens necessary for mathematical reasoning. These results suggest that MLLMs encode auxiliary elements merely as static visual features rather than computational operators that drive reasoning pathways.

To assess the impact of this representational gap on logical reasoning, we extended our evaluation to three different MLLM architectures, measuring the Construction Rate (frequency of explicitly constructing auxiliary) and Answer Accuracy. As shown in \Cref{fig:bench_acc}, the introduction of visual steering vectors resulted in no statistically significant performance improvement across all evaluated models, with accuracy restricted to a narrow margin. For example, the best-performing Gemma3-27B only increased by 1.6\%. The Construction Rate further confirms this disconnect. Even with strong visual cues injected, the models' explicit attempts to construct auxiliary lines remained consistently low, peaking at only 15.3\%. 

\begin{figure}[h!]
    \centering
    \includegraphics[width=\linewidth]{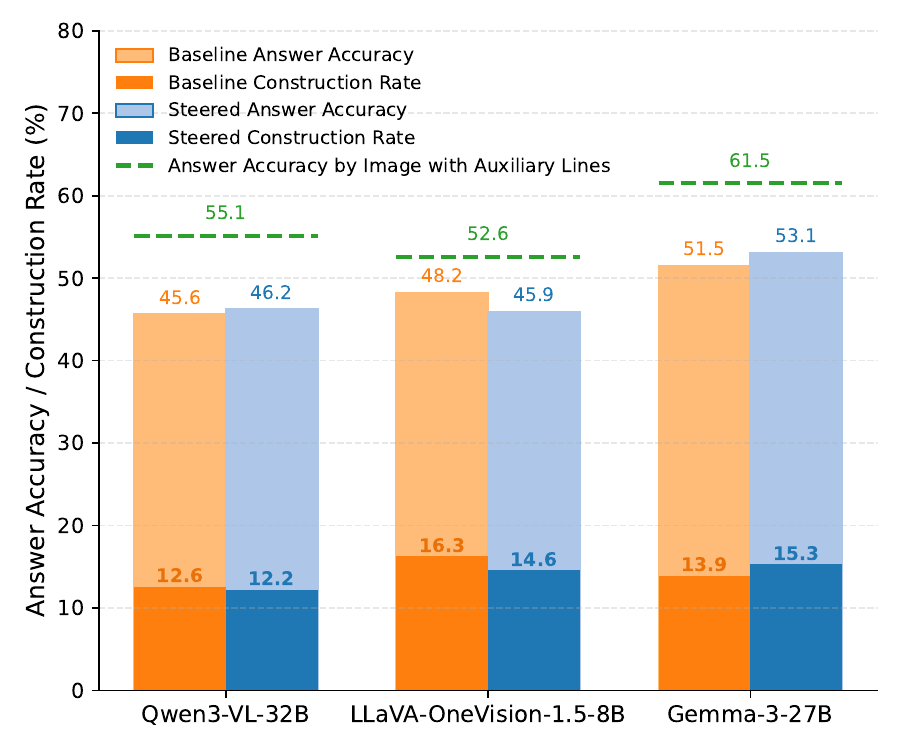}
    \caption{Evaluation of activation steering on geometric reasoning performance across different MLLMs. As shown in the figure, neither metric shows stably significant improvement after visual concept injection. This indicates a fundamental disconnect between visual perception and logical reasoning, where enhanced visual grounding fails to translate into logical strategies.}
    \label{fig:bench_acc}
\end{figure}

This consistent failure across diverse architectures highlights \textbf{a disconnection between visual perception and logical reasoning}. Injected visual vectors, while perceptually salient, remain functionally inert because the model lacks the intrinsic causal mechanism to recognize these visual features as distinct signals for problem-solving interventions. Consequently, while visual steering enhances the explicit geometric description, it fails to build problem-solving trajectories, resulting in stagnant reasoning performance.

\section{Related Work}

\paragraph{Causal Interpretability.} Causal interpretability aims to understand the internal mechanisms by intervening on the model and analyzing its causal effects on specific behaviors \cite{geiger2021causal,li2022emergent,abraham2022cebab,canby2025reliable}.  Early interpretability research relied on attribution \cite{ribeiro2016should,lundberg2017unified}, linear probing \cite{alain2016understanding,tenney2019bert} or attention \cite{jain2019attention,wiegreffe2019attention,yan2026lasm}, but these methods only discover statistical correlations instead of causality \cite{belinkov2022probing}. To bridge this gap, recent advancements include active intervention techniques, such as causal tracing \cite{meng2022locating,meng2022mass,zhang2023towards}, circuit analysis \cite{elhage2021mathematical,wang2022interpretability,conmy2023towards}, and causal mediation analysis \cite{vig2020causal,mueller2024quest}. Extending these, our work introduces an activation steering framework to study how different visual concepts are encoded within MLLMs.

\paragraph{Activation Steering.} 
Also known as representation engineering, it is a lightweight technology to control model behavior by modifying the internal activation value during inference \cite{turner2023steering,bartoszcze2025representation,wehner2025taxonomy}. Based on linear representation hypothesis, this technology has been used to control model personality \cite{chen2025persona}, reduce sycophancy \cite{sharma2023towards,rimsky2024steering} and toxicity \cite{wang2024semantics}, and refuse to answer harmful requests \cite{arditi2024refusal,lee2024programming}; reduce hallucinations and guide model to be more honest and truthful \cite{li2023inference,marks2023geometry,azaria2023internal}; output expected formats \cite{stolfo2024improving}, emotions \cite{turner2023steering,tigges2024language,zhao2024beyond} and styles \cite{azizi2025activation}. Recent research has shifted to MLLMs for solving visual hallucinations \cite{liu2024reducing,sivakumar2025steervlm,gan2025textual,khayatan2025analyzing}. In this work, we consider that visual concepts can also be linearly encoded, and thus we propose a causal framework based on activation steering to explore the mechanism of different visual concepts. \looseness=-1

\section{Conclusion}

In this paper, we propose a causal framework to investigate the internal encoding mechanisms of different visual concepts in MLLMs. Our systematically applied interventions reveal distinct representation patterns across various concepts. Specifically, we highlight divergences in concept encoding, latent compensatory mechanisms, and functional reasoning gaps. Collectively, these findings illuminate the intrinsic mechanisms and capability boundaries of current multimodal models.

\section*{Limitations}
Our steering framework builds upon the linear representation assumption. While this assumption holds well for most concept categories evaluated in our study, it may not perfectly capture highly abstract concepts. Second, our mechanistic interpretations regarding internal dynamics, such as compensatory mechanisms, are inferred from behavioral interventions; validating these requires complementary observational techniques. Finally, our empirical findings are primarily validated on specific mainstream architectures. Future work should conduct systematic evaluations across a broader range of model families to explore universal patterns and cross-architecture generalization.

\section*{Ethical Considerations}
This work advances the field of mechanistic interpretability by enabling precise control over MLLM generation. While this facilitates model understanding and alignment, the ability to induce specific hallucinations or emotional biases introduces risks. For instance, humans with model access could manipulate outputs to deviate from reality. However, our findings regarding the visual verification mechanism suggest that models possess inherent resistance to groundless fabrication. We hope this work inspires further research into internal guardrails that can detect and mitigate such latent space interventions, ensuring MLLMs remain faithful and reliable tools.

\bibliography{custom}

\newpage
\appendix
\section{Mathematical Proof}
\label{app:proof}
In this section, we provide a theoretical justification for using the difference-in-means vector as the steering direction. We demonstrate that this simple arithmetic operation is theoretically optimal under the linear representation hypothesis \cite{belrose2024diff}. Specifically, we first prove that any valid linear concept must align with this direction, and then show that this direction maximizes the intervention effect when the model weights are unknown.

Let $h \in \mathbb{R}^d$ denote the activation vector at a specific layer, and let $z \in \{0, 1\}$ be the binary label representing the presence or absence of a visual concept. We assume the linear representation hypothesis: the concept $z$ is encoded in the activation space via a linear predictor $\eta(h) = w^T h + b$. However, the true parameters $(w, b)$ used by the model are unknown to us.

\begin{definition}
     The trivially attainable loss for labels $z$ and loss $\mathcal{L}$ is the lowest possible expected loss available to a constant predictor $\eta(h)$: $$\mathcal{L}_{\tau} = \inf_{\alpha \in \mathbb{R}} \mathbb{E}[\mathcal{L}(\eta(h), z)]$$
\end{definition}

\begin{definition}
     An admissible predictor for labels $z$ and loss $L$ is a linear predictor whose loss is strictly less than the trivially attainable loss $\mathcal{L}_{\tau}$. 
\end{definition}

\begin{definition}
    A loss function $L(\eta, z)$ is monotonic if it monotonically decreases in $\eta$ when $z=1$, and monotonically increases in $\eta$ when $z=0$. Equivalently, its derivative wrt satisfies:
     $$\mathcal{L}_{\eta}(\eta,1) \leq 0 \leq \mathcal{L}_{\eta}(\eta,0)
     $$
\end{definition}

First, we establish the \textbf{necessity} of the difference-in-means direction. Let $\mu_1 = \mathbb{E}[h \mid z=1]$ and $\mu_0 = \mathbb{E}[h \mid z=0]$ be the class centroids.

\begin{theorem}
    Let $\delta = \mu_1 - \mu_0$ be the difference in class centroids. Suppose $\eta(h) = w^T h + b$ is admissible for $(h,z)$ and convex monotonic $\mathcal{L}$. Then $\langle w, \delta \rangle > 0$.
\end{theorem}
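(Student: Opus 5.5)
We argue by contraposition: assuming $\langle w,\delta\rangle \le 0$, we will exhibit a constant predictor whose expected loss is at most $\mathbb{E}[\mathcal{L}(w^T h + b, z)]$. This contradicts admissibility, since $\eta$ would then not beat the trivially attainable loss $\mathcal{L}_\tau$. The natural candidate is the constant that replaces $w^T h$ by its class-conditional mean. We use Jensen's inequality (convexity) within each class, and then monotonicity to compare across classes.

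\textbf{Step 1 (Jensen within classes).} Write $p=\Pr(z=1)$. By convexity of $\mathcal{L}(\cdot,z)$ in $\eta$, conditional Jensen gives
\begin{equation*}
\mathbb{E}[\mathcal{L}(w^T h+b,z)] \ \ge\ p\,\mathcal{L}(w^T\mu_1+b,1)+(1-p)\,\mathcal{L}(w^T\mu_0+b,0).
\end{equation*}
So the class-wise averages $a_1=w^T\mu_1+b$ and $a_0=w^T\mu_0+b$ already perform at least as well as $\eta$. Note that $a_1-a_0=\langle w,\delta\rangle\le 0$ under our assumption.

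\textbf{Step 2 (collapse to one constant via monotonicity).} We now want a single constant $c$ satisfying $\mathcal{L}(c,1)\le \mathcal{L}(a_1,1)$ and $\mathcal{L}(c,0)\le\mathcal{L}(a_0,0)$. Since $\mathcal{L}(\cdot,1)$ is nonincreasing, any $c\ge a_1$ satisfies the first inequality. Since $\mathcal{L}(\cdot,0)$ is nondecreasing, any $c\le a_0$ satisfies the second. Because $a_1\le a_0$, the interval $[a_1,a_0]$ is nonempty; we take $c=a_1$, for instance. Then
\begin{equation*}
\mathbb{E}[\mathcal{L}(c,z)] \le p\,\mathcal{L}(a_1,1)+(1-p)\mathcal{L}(a_0,0)\le \mathbb{E}[\mathcal{L}(\eta(h),z)].
\end{equation*}
Hence $\mathcal{L}_\tau\le \mathbb{E}[\mathcal{L}(\eta(h),z)]$, which contradicts the strict inequality in the definition of admissibility. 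Therefore $\langle w,\delta\rangle>0$.

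\textbf{Main obstacle.} The only delicate point is making the use of Jensen legitimate. This requires that the conditional expectations $\mu_0,\mu_1$ exist, i.e.\ $h$ is integrable, and that $p\in(0,1)$ so that both classes are nondegenerate. If $p\in\{0,1\}$, a constant equal to $w^T\mu_z+b$ already ties $\eta$ by Jensen alone, so admissibility fails trivially and the statement holds vacuously. We should also read the definition of $\mathcal{L}_\tau$ as an infimum over constant values $\alpha$, with $\eta\equiv\alpha$. Monotonicity is needed only in the weak form given in the definition; no differentiability is required beyond what that definition presupposes.
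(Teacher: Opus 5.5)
Your proof is correct and is essentially the paper's argument: apply conditional Jensen within each class, then use monotonicity to collapse the two class-wise means $a_1 \le a_0$ into a single constant whose loss is no larger than that of $\eta$, contradicting admissibility. The only cosmetic difference is that the paper collapses to the unconditional mean $\mathbb{E}[\eta(h)] = p\,a_1 + (1-p)\,a_0$, which also lies in $[a_1,a_0]$, whereas you take $c = a_1$; any point of that interval works.
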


\begin{proof}
    Suppose for the sake of contradiction that $\langle w, \delta \rangle \leq 0$. The expectation of the predictor's output conditioned on the class is: $\mathbb{E}[\eta(h) \mid z] = w^T \mathbb{E}[h \mid z] + b = w^T \mu_z + b$. The difference is:
    $$w^T (\mu_1 - \mu_0) = \langle w, \delta \rangle \leq 0,$$
    and therefore 
    $$\mathbb{E}[\eta(h) \mid z=1] \leq \mathbb{E}[\eta(h)] \leq \mathbb{E}[\eta(h) \mid z=0]$$
    We can now show that the expected loss is lower bounded by the trivially attainable loss $\mathcal{L}$:
$$\begin{array}{r l}
\displaystyle \mathbb{E}[\mathcal{L}(\eta(h), z)] & \displaystyle = \mathbb{E}_z [ \mathbb{E}_{h \mid z} [\mathcal{L}(\eta(h), z)] ] \\[8pt]
& \displaystyle \geq \mathbb{E}_z [ \mathcal{L}(\mathbb{E}_{h \mid z}[\eta(h)], z) ] \\[8pt]
& \displaystyle \geq \mathbb{E}_z [ \mathcal{L}(\mathbb{E}[\eta(h)], z) ]\\[8pt]
& \displaystyle \geq \mathcal{L}_{\tau}
\end{array}$$

    The penultimate step is justified because, by the monotonicity of $\mathcal{L}$ and the inequality derived above, replacing the conditional expectations with the constant $\mathbb{E}[\eta(h)]$ only decreases the loss.

    If $\mathbb{E}[\mathcal{L}(\eta(h), z)] \geq \mathcal{L}_{\tau}$, the classifier cannot be admissible (Def. A.2), contradicting our earlier assumption. Therefore the admissibility of $\eta(h)$ implies $\langle w, \delta \rangle > 0$. \hfill
\end{proof}

Then, we prove that the difference-in-means direction is the \textbf{optimal} intervention strategy in the worst-case scenario. We define an additive intervention as $h' = h + \alpha u$, where $u$ is a unit direction vector. The effectiveness of this steering on a specific model component $w$ is measured by the change in the predictor's output: $\Delta \eta = \eta(h') - \eta(h) = \alpha (w^T u)$. When $w$ is unknown, we can successfully perform additive edits by selecting $u$ to maximize the worst-case directional derivative.

\begin{theorem}
    Let $\mathcal{H}$ denote the set of all admissible predictors. Then the maximin directional derivative objective
    $$u^* = \underset{\|u\|=1}{\arg\max} \inf_{\eta \in \mathcal{H}} (w^T u)$$
    is maximized by the difference-in-means direction $u^* = \frac{\delta}{\|\delta\|}.$
\end{theorem}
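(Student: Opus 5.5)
The plan is to compare the worst-case value of $w^T u$ over $\mathcal{H}$ for the candidate $u^*=\delta/\|\delta\|$ against the same worst-case value for every other unit vector. By the preceding theorem (necessity), every admissible predictor $\eta(h)=w^Th+b$ satisfies $\langle w,\delta\rangle>0$, so $w^Tu^*>0$ for all $\eta\in\mathcal{H}$. Hence $\inf_{\eta\in\mathcal{H}} w^Tu^*\ge 0$. This infimum is exactly the "never steers in the wrong direction" guarantee.

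For any unit $u\neq u^*$, I will show $\inf_{\eta\in\mathcal{H}} w^Tu<0$, and in fact $=-\infty$. Write $u=a\,u^*+b\,v$ with $v\perp\delta$, $\|v\|=1$, $b\neq 0$ (if $b=0$ then $u=-u^*$, which is handled the same way). The idea is to exhibit admissible predictors whose weight has a small positive component along $\delta$ but a large component along $-\mathrm{sign}(b)\,v$. Concretely, take $w_t=\epsilon\,u^* - t\,\mathrm{sign}(b)\,v$ with $\epsilon>0$ fixed and $t\to\infty$. Then $w_t^Tu=\epsilon a - t|b|\to-\infty$. Since the inner product with $\delta$ stays $\epsilon\|\delta\|>0$, these weights are at least compatible with the necessary condition.

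The main obstacle is showing that such $w_t$ are genuinely admissible. Admissibility depends on the whole distribution of $h$, not only on the centroids, and an orthogonal component could raise the loss. As in \cite{belrose2024diff}, I would read $\mathcal{H}$ as admissible with respect to the information available to us, namely the centroids $\mu_0,\mu_1$; the worst case then ranges over distributions consistent with them.

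Under that reading, take the distribution concentrated on the two points $\mu_0,\mu_1$. There the loss of $\eta$ depends only on the two numbers $w^T\mu_z+b$. Their difference is $\langle w,\delta\rangle>0$, so after choosing $b$ and rescaling $w$ appropriately the loss falls strictly below $\mathcal{L}_\tau$ for a strictly convex monotonic $\mathcal{L}$ such as log-loss. This means every $w$ with $\langle w,\delta\rangle>0$ belongs to $\mathcal{H}$. If strictness is delicate for a general convex loss, I would verify it first for logistic loss and note that the argument only needs a positive separation in the mean margins.

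Combining the two parts: $u^*$ attains a nonnegative worst-case value, while every other unit direction has worst-case value $-\infty$ (or at least negative). The maximin objective is therefore achieved uniquely at $u^*=\delta/\|\delta\|$. I would finish by remarking that the optimal value itself is $0$, approached as $w\to 0$ along admissible directions. So the precise content of the theorem is the argmax statement, not the value.
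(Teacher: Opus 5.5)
Your skeleton matches the paper's. Theorem A.4 places every admissible weight in the open half-space $\langle w,\delta\rangle>0$, so $w^Tu^*>0$ pointwise, and an orthogonal adversary pushes $w^Tu$ below zero for every other unit $u$. The paper simply asserts that $\mathcal{H}$ is ``unconstrained in the orthogonal directions.'' You are right to flag this as the real step. You are also right that the maximin value at $u^*$ is $0$: the paper's bound $c\|\delta\|>0$ holds only pointwise, so the argmax actually rests on the other directions being strictly negative.

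The gap is in how you close that step. You read $\mathcal{H}$ as the predictors admissible for \emph{some} distribution with centroids $\mu_0,\mu_1$, and use the two-point distribution. That proves the claim for a larger set than the one in the statement, which is the admissible set of the given distribution of $(h,z)$. For that distribution your $w_t$ is generally not admissible once $t$ is large. For example, if $v^Th$ is nondegenerate and independent of $(z,\delta^Th)$, the log-loss of $w_t^Th+b$ grows linearly in $t$ whatever $b$ is. So $\inf_{\mathcal{H}}w^Tu$ need not be $-\infty$, and the paper's ``arbitrarily large and negative'' fails for the same reason.

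The argmax can be rescued by a local argument at the trivial predictor:
\begin{itemize}
\item Let $p=P(z=1)$, let $c^*$ minimize $p\mathcal{L}(c,1)+(1-p)\mathcal{L}(c,0)$, and put $\gamma=(1-p)\mathcal{L}_\eta(c^*,0)=-p\mathcal{L}_\eta(c^*,1)$. The equality is the first-order condition, and $\gamma=p(1-p)>0$ for log-loss.
\item At $(w,b)=(0,c^*)$ the gradient of the expected loss in $w$ is $p\mathcal{L}_\eta(c^*,1)\mu_1+(1-p)\mathcal{L}_\eta(c^*,0)\mu_0=-\gamma\delta$.
\item Hence for any $w_0$ with $\langle w_0,\delta\rangle>0$, the predictor $(s\,w_0,c^*)$ has loss $\mathcal{L}_\tau-s\gamma\langle w_0,\delta\rangle+o(s)$. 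It is therefore admissible for small $s>0$. Dominated convergence justifies this for log-loss when $\mathbb{E}\|h\|<\infty$.
\item Take $w_0$ to be your $w_t$ for one fixed large $t$, or $w_0=u^*$ when $u=-u^*$. This gives $\inf_{\mathcal{H}}w^Tu\le s\,w_0^Tu<0\le\inf_{\mathcal{H}}w^Tu^*$, which is all the argmax needs.
\end{itemize}

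A smaller point on your two-point step: no rescaling is needed there. Choosing $b$ with $w^T\mu_0+b\le c^*\le w^T\mu_1+b$ already gives loss strictly below $\mathcal{L}_\tau$ for log-loss, by monotonicity. Rescaling would only place a multiple of $w$ in $\mathcal{H}$, not $w$ itself, which is what your conclusion ``every $w$ with $\langle w,\delta\rangle>0$ belongs to $\mathcal{H}$'' asserts.
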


\begin{proof}
    From Theorem A.4, we know that for any $\eta \in \mathcal{H}$, the weight vector $w$ lies in the open half-space defined by $\delta$: $\{w \mid w^T \delta > 0\}$. We can decompose any $w$ into a component parallel to $\delta$ and a component orthogonal to it:
    $$w = c \delta + w_{\perp}, \quad \text{where } c > 0 \text{ and } w_{\perp} \perp \delta.$$
    Consider the directional derivative $w^T u$. If we choose any direction $u$ that is not parallel to $\delta$, it will have a non-zero projection onto the orthogonal subspace $\delta^{\perp}$. Since the set of admissible predictors $\mathcal{H}$ is unconstrained in the orthogonal directions, an adversary can choose a $w_{\perp}$ such that $w_{\perp}^T u$ is arbitrarily large and negative, thereby minimizing $w^T u$. However, if we choose $u$ to be parallel to $\delta$ (i.e., $u = \frac{\delta}{\|\delta\|}$), then: $$w^T u = (c \delta + w_{\perp})^T \frac{\delta}{\|\delta\|} = c \|\delta\| + 0 = c \|\delta\|.$$
    Since $c > 0$ (Theorem A.4) and $\|\delta\| > 0$, this value is strictly positive. Thus, the difference-in-means direction is the only direction that guarantees a positive intervention effect for all admissible linear encodings of the concept. Any other direction risks being orthogonal to or opposing the model's specific implementation of the concept. \hfill
\end{proof}

These theorems mathematically validate that extracting the vector $v_c$ via simple subtraction is not merely a heuristic, but the theoretically optimal strategy for causal intervention when the model's true parameters are unknown.

\section{Data}
\label{app:data}

\subsection{Entity}
We select the instruction-guided image editing dataset Imgedit \cite{ye2025imgedit} and MagicBrush \cite{Zhang2023MagicBrush} as the source data. Through python script, we filter out the image pairs containing "remove" in the editing instructions to obtain the image pairs with and without target entities. Subsequently, we also use LLM to further filter the instructions to ensure that only the object is simply deleted in the instructions, rather than replaced by another object. At the same time, record the name of the entity as the ground truth label. Prompt is as follows:
\begin{tcolorbox}[title=\textbf{Prompt}, colback=gray!10, colframe=gray!50!black, arc=1mm]
\small 
You are a helpful assistant for image-editing dataset processing. Your task is to extract the object being removed from the user's instruction. \\
Rules: \\
1. If the instruction is purely about removing objects, extract the object name. \\
2. If the instruction contains other actions like "add", "place", "change", "replace", or implies adding something new (e.g., "remove X and add Y"), output exactly "SKIP". \\
3. Do not output a full sentence. Output ONLY the object name or "SKIP". \\
Examples: \\
Input: "Remove the candles." Output: candle \\
Input: "Remove the horses that are on the bridge." Output: horse \\
Input: "Remove the vegetables and add candies to the bowl." Output: SKIP \\
Input: "Remove all the food and place a laptop on the table." Output: SKIP
\end{tcolorbox}
We finally construct 10,000 image pairs, and entity names are shown in \Cref{fig:app_cloud}.

\begin{figure}[h!]
    \centering
    \includegraphics[width=\linewidth]{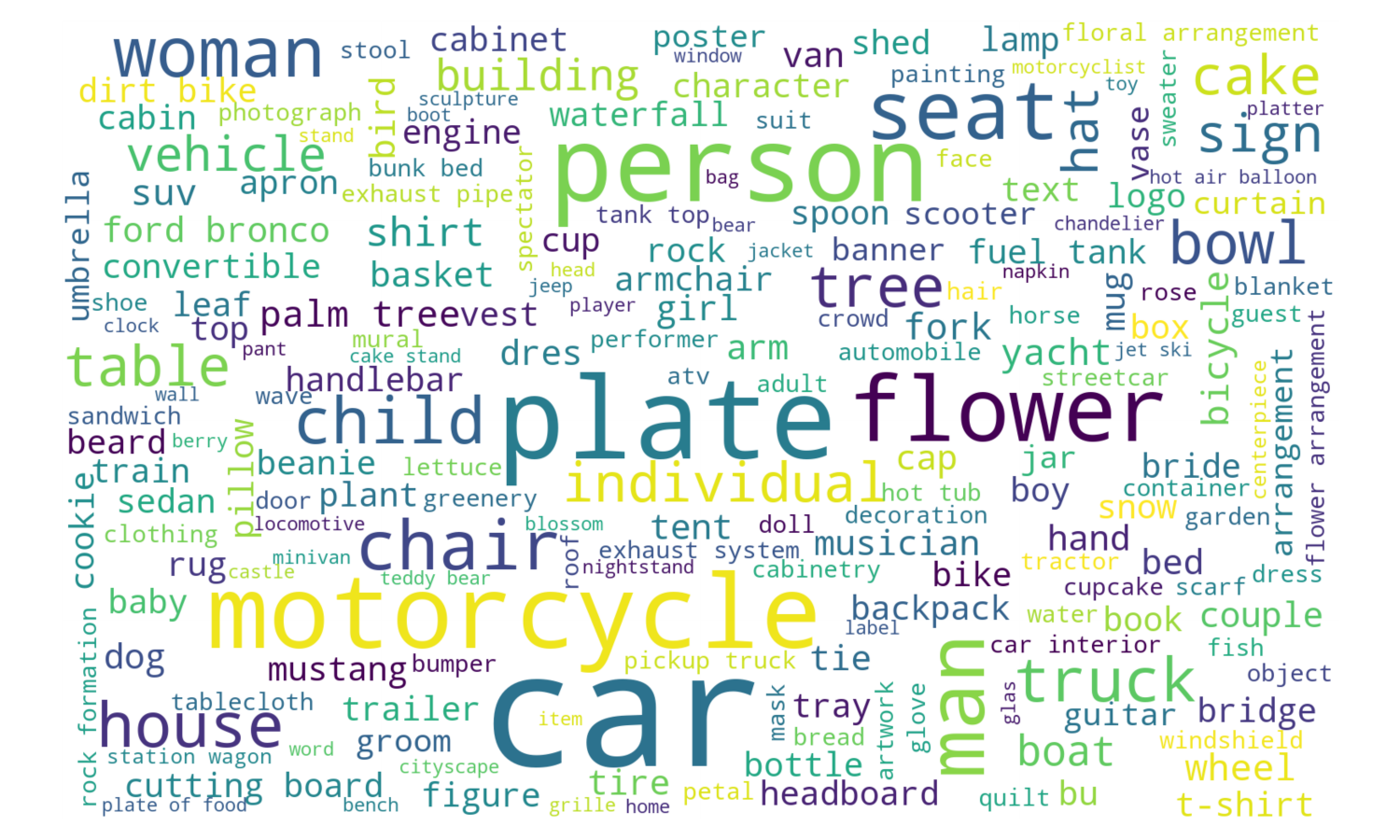}
    \caption{Word Cloud of entities.}
    \label{fig:app_cloud}
\end{figure}

\subsection{Visual Style} 
To construct the concept vectors for visual style, we utilize the OmniConsistency dataset \cite{song2025omniconsistency}, a high-quality paired stylization benchmark. This dataset contains 2,600 paired images covering 22 distinct visual styles, including American Cartoon, Oil Painting, Sketch, Pixel Art, and Vector Style. The dataset was constructed using a GPT-4o-driven generation pipeline followed by a rigorous human-in-the-loop filtering process. Specifically, for each style, the dataset provides an original image (typically photorealistic or neutral) and a corresponding stylized version. The strict filtering criteria ensured that the stylized images preserve the semantic structure, identity, and layout of the original images while manifesting strong stylistic attributes. This high degree of content alignment is critical for our activation steering method, as it ensures that the difference vector captures the style direction rather than object semantics.

\subsection{Emotion} 
For the Emotion category, we utilize EmoSet \cite{yang2023emoset}, a large-scale visual emotion dataset annotated based on Mikels’ eight-emotion model. The dataset comprises 3.3 million images with rich attributes, covering eight categories: Amusement, Awe, Contentment, Excitement, Anger, Disgust, Fear, and Sadness. 

Unlike entity or style, where paired data is explicitly available, emotional concepts are often entangled with scene content. To isolate the active emotional component, we adopt a baseline-subtraction strategy. In the psychological emotion space, Contentment represents a state of low physiological arousal and psychological equilibrium, often associated with stable, peaceful scenes (e.g., landscapes, static objects) \cite{yang2023emoset}. So, we designate Contentment as the negative set for all other emotion concepts and extracted 7 concept vectors.

\subsection{Abstract Concept}
Constructing a dataset for abstract concepts (e.g., peace, justice, danger) presents a unique challenge, as these concepts lack a singular, fixed physical form and are often conveyed through symbolic imagery or high-level scene semantics. 

We curated the dataset by crawling high-quality photography from Pexels. This platform was selected because its content is frequently tagged with abstract conceptual keywords by human creators, ensuring a strong correlation between the visual symbolism and the semantic label. We constructed a dataset covering 20 distinct abstract concepts. 

To extract a robust direction for abstract concepts, we adopt antonyms for selecting negative samples, which are recorded in \Cref{tab:app_abs}. Taking the concept of ``peace" as an instance: The positive set consists of imagery symbolizing harmony, such as doves, tranquil landscapes, or shaking hands. The negative set consists of imagery symbolizing conflict, such as battlefield scenes, destruction, or aggressive confrontations. By computing the difference-in-means between these opposed sets, the resulting concept vector effectively captures the semantic axis from conflict to harmony, filtering out unrelated visual features.

\begin{table*}[h!]
    \centering
    \resizebox{\textwidth}{!}{%
    \begin{tabular}{ll|ll|ll|ll}
        \toprule
        \textbf{keywords} & \textbf{antonyms} & \textbf{keywords} & \textbf{antonyms} & \textbf{keywords} & \textbf{antonyms} & \textbf{keywords} & \textbf{antonyms} \\
        \midrule
        peace       & war           & ancient     & modern     & danger        & safe        & freedom     & captivity \\
        hope        & despair       & luxury     & poverty       & stress     & relaxation     & wisdom      & stupid \\
        shame       & honor      & justice      & injustice       & creativity  & imitation     & chaos       & order \\
        humility        & arrogance          & strength    & weakness      & evil       & goodness           & profane      & sacred \\
        failure     & success        & reality     & fantasy       & knowledge   & ignorance     & complexity  & simplicity \\
        \bottomrule
    \end{tabular}
    }
    \caption{List of keywords and antonyms for abstract concept during data collection. We use the keyword to retrieve positive samples and the antonym for negative samples.}
    \label{tab:app_abs}
\end{table*}

\subsection{Logical Reasoning}
\label{app:geo}
For the investigation of visual logical reasoning in Section 6, we use the GeoLaux dataset \cite{fu2025geolaux}, a specialized dataset designed to evaluate MLLMs on geometry problems that require auxiliary line construction. GeoLaux is ideal because it provides both the initial problem state and the solution state in the form of geometric constructions.

We use the original problem images as negative sets, which contain only the initial geometric shapes without any auxiliary constructions. And we use the corresponding solution images (annotated images) as positive sets, where the key auxiliary lines (e.g., connecting lines, extensions, parallel lines) are explicitly drawn. Examples are shown in \Cref{fig:app_geo}.

\begin{figure*}[h!]
    \centering
    \includegraphics[width=\linewidth]{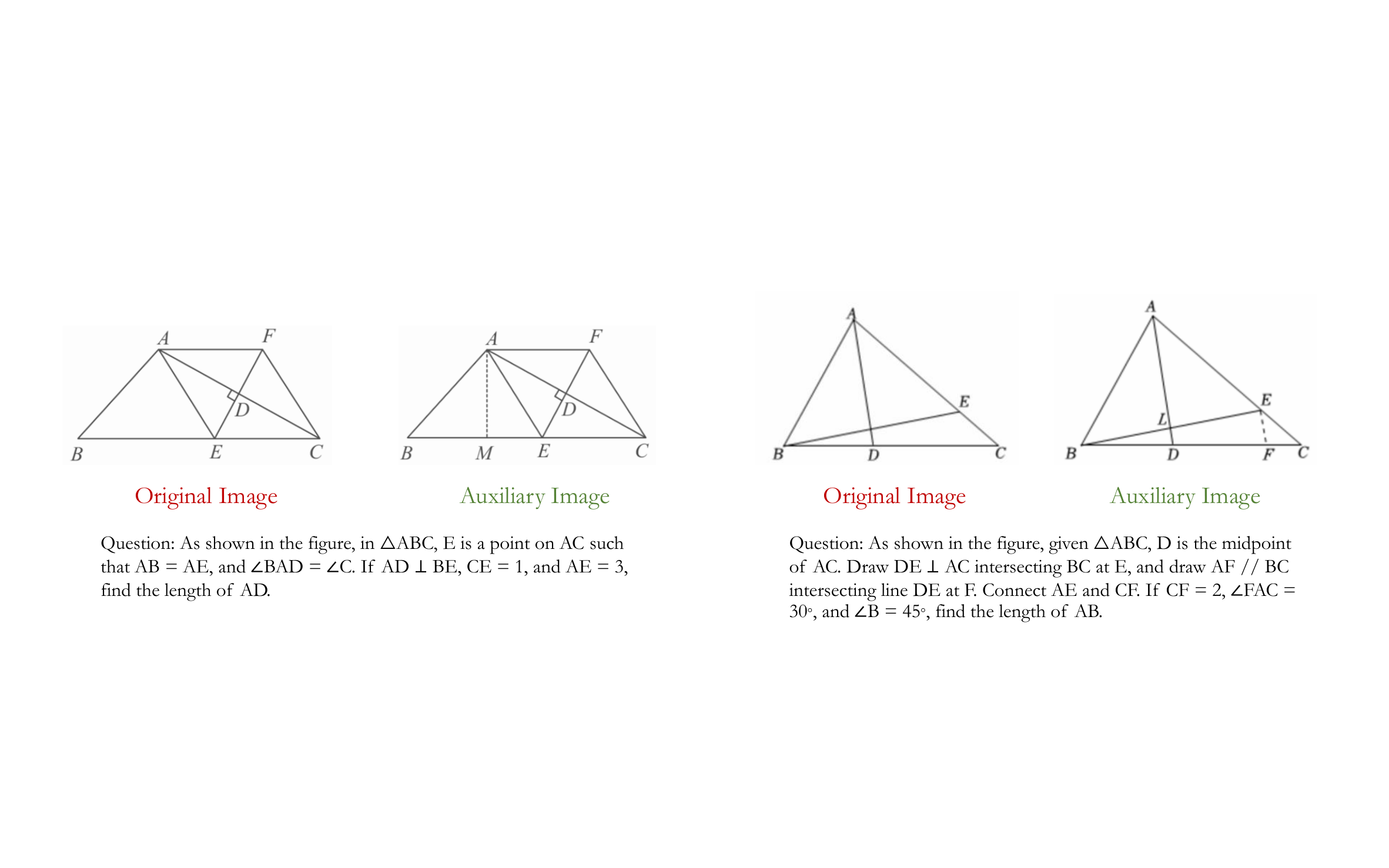}
    \caption{Examples of GeoLaux dataset.}
    \label{fig:app_geo}
\end{figure*}

\section{Experiment Details}

\subsection{Metrics}
\label{app:metric}
We utilize three metrics to rigorously evaluate the causal influence of visual concepts from different perspectives.

\paragraph{Success Rate.} This metric acts as a proxy for causal steerability. It serves not merely to count keyword occurrences but to measure the intervention's ability to force the model to verbalize a specific visual concept it might otherwise ignore. Specifically, we define the Success Rate as the proportion of generated responses where the target concept is explicitly identified by an gpt-4o. A high Success Rate indicates that the internal steering vector successfully overrides the default generation trajectory to trigger the target semantic output. The prompt is as follows.

\begin{tcolorbox}[title=\textbf{Prompt}, colback=gray!10, colframe=gray!50!black, arc=1mm]
\small 
You are an objective evaluator for text generation. Your task is to determine if a specific style or concept was successfully injected into the text.

I will provide you with a Target Concept, a Base Text (before injection), and a Steered Text (after injection).

Target Concept: \{target\_word\}

Base Text:\{base\_output\}

Steered Text:\{steered\_output\}

Evaluation Criteria:

1. Does the Steered Text explicitly mention the Target Concept or words highly related to it?

2. Does the Steered Text describe the image in the style of the Target Concept compared to the Base Text?

3. If the Base Text already contained the concept, answer 1 only if the Steered Text maintains or emphasizes it.

Task:

Output '1' if the injection is SUCCESSFUL (the concept is clearly present).

Output '0' if the injection is FAILED (the concept is missing or the text is nonsense).

Output Format:

Only output a single digit: 0 or 1. Do not output any other text.
\end{tcolorbox}

\paragraph{Sentence-BERT Similarity.}
While Success Rate captures explicit discrete mentions, this metric captures dense semantic proximity. We compute the cosine similarity between the embeddings of the generated text and the target concept description using Sentence-BERT \cite{reimers2019sentence}, \texttt{all-mpnet-base-v2} version. This allows us to assess whether the steering intervention successfully shifts the global context and style of the text, even if the specific target word is not explicitly mentioned.

\paragraph{Logit Boost.}
To verify that behavioral changes stem from internal representation shifts rather than random decoding noise, we define Logit Boost in its exponentiated form: $e^{Logit_{steered}} / e^{Logit_{base}}$. This formulation quantifies the multiplicative factor applied to the target token's unnormalized probability immediately after the steering intervention, directly reflecting the exponential strength of the injected signal within the residual stream.

\subsection{Prompt for Concept Extraction}
\label{app:prompt}
To ensure that the extracted concept vectors precisely capture the target semantic dimension and minimize the interference of entangled features, we employed specific prompts for different categories during the forward pass of the concept extraction phase, which is shown at \Cref{tab:app_prompt}.

\begin{table*}[h!]
    \centering
    \begin{tabular}{ll}
        \toprule
        \textbf{Category} & \textbf{Prompt} \\
        \midrule
        \textbf{Entity} & 
        Describe the image concisely. \\
        \addlinespace 
        
        \textbf{Visual Style} & 
        Describe the style of the image. \\
        \addlinespace
        
        \textbf{Emotion} & 
        Describe the emotional atmosphere of the image.\\
        \addlinespace
        
        \textbf{Abstract Concept} & 
        Describe the deep meaning of the image.\\
        \addlinespace

        \textbf{Logical Reasoning} & \textit{Instance-specific question stem}
        \\
        \bottomrule
    \end{tabular}
    \caption{List of category-specific prompts used during concept extraction.}
    \label{tab:app_prompt}
\end{table*}

By using directed prompts, we explicitly guide the model's internal representation to the relevant subspace before computing the activation difference.

\section{Faithfulness}
\label{app:Faithfulness}
To rigorously verify whether the linear vectors truly represent the target concepts rather than merely acting as arbitrary adversarial directions, we introduce a Faithfulness metric $\rho$. This metric quantifies the extent to which our causal intervention replicates the model's natural behavior when perceiving the actual visual concept.

We decompose the visual difference's impact on the model's output logits into two components: Total Effect and Natural Indirect Effect.

\textbf{Total Effect (TE)}. This represents the ground-truth change in logits caused by the actual presence of the visual concept in the input image. For a target concept token $y_c$, the TE is calculated as the logit difference between the positive image $x_{c,i}^{+}$ and the negative image $x_{c,i}^{-}$:
\begin{align}
    TE &= \sum_{i=1}^{N} \bigg( \text{logit}(y_c \mid x_{c,i}^{+}, x^t) \notag \\
    &- \text{logit}(y_c \mid x_{c,i}^{-}, x^t) \bigg).
\end{align}

\textbf{Natural Indirect Effect (NIE)}. This represents the logit change induced solely by our activation steering intervention. Here, we input the negative samples (which lack the concept) but artificially inject the extracted vector $v_c^{l}$ to simulate the concept's presence:
\begin{align}
NIE &= \sum_{i=1}^{N} \bigg( \text{logit}(y_c \mid x_{c,i}^{-}, x^t, h^{l} \leftarrow h^{l} + \alpha v_c^{l}) \notag \\ 
&\quad - \text{logit}(y_c \mid x_{c,i}^{-}, x^t) \bigg).
\end{align}

\textbf{Faithfulness} ($\rho$). Based on these definitions, we define the faithfulness ratio $\rho$ as:
\begin{equation}
    \rho = \frac{NIE}{TE}, 
\end{equation}
A $\rho$ value approaching 1 indicates that the intervention vector $v_c^{l}$ perfectly faithfully reconstructs the causal mechanism triggered by the actual visual stimulus. It implies the vector has captured the precise "direction" the model uses to encode that concept. A low $\rho$ suggests the vector might be orthogonal to the true reading mechanism, or that the concept relies on non-linear computations that linear steering cannot capture.

\Cref{tab:faithfulness} presents the faithfulness metrics $\rho$ across six models. We observe a consistently high degree of faithfulness, with scores exceeding 0.85 across most categories. This provides strong empirical evidence that our extracted steering vectors are not merely adversarial perturbations, but genuine representations of the underlying visual concepts. Notably, a clear scaling law emerges from the results: larger models, such as Qwen3-VL-32B and Gemma-3-27B, exhibit superior faithfulness compared to their smaller counterparts, particularly in encoding complex Abstract Concepts (reaching 0.88). This trend suggests that as model capacity increases, the internal representation of visual concepts becomes more linear and disentangled, thereby enhancing the precision of activation steering. Furthermore, the method demonstrates remarkable robustness across distinct architectures and model families, confirming its universality as a mechanism-agnostic interpretability tool.

\begin{table}[t]
    \centering
    \resizebox{\linewidth}{!}{%
    \begin{tabular}{l|cccc}
        \toprule
        \textbf{Model} & \textbf{Entity} & \textbf{Style} & \textbf{Emotion} & \textbf{Abstract} \\
        \midrule
        Qwen2.5-VL-7B       & 0.88 & 0.92 & 0.85 & 0.78 \\
        Qwen3-VL-8B         & 0.92 & 0.94 & 0.89 & 0.82\\
        Qwen3-VL-32B        & 0.95 & 0.97 & 0.93 & 0.88 \\
        LLaVA-OneVision 8B  & 0.93 & 0.90 & 0.86 & 0.80 \\
        Gemma-3-4B          & 0.92 & 0.85 & 0.93 & 0.84 \\
        Gemma-3-27B         & 0.98 & 0.93 & 0.94 & 0.84 \\

        \bottomrule
    \end{tabular}
    }
    \caption{Faithfulness Analysis across Different Models.}
    \label{tab:faithfulness}
\end{table}

\section{More Results}

\subsection{Layer-wise Distribution of Different Architectures}
\label{app:arc}

\Cref{fig:app_qwen3}, \ref{fig:app_qwen2.5}, \ref{fig:app_llava} show different layer-wise distribution of different models. And Gemma3 family's results are shown in \Cref{fig:distribution}. It shows intra-family similarity alongside inter-family divergence.

\begin{figure*}[h!]
    \centering
    \begin{subfigure}[b]{0.24\textwidth} 
        \centering
        \includegraphics[width=\linewidth]{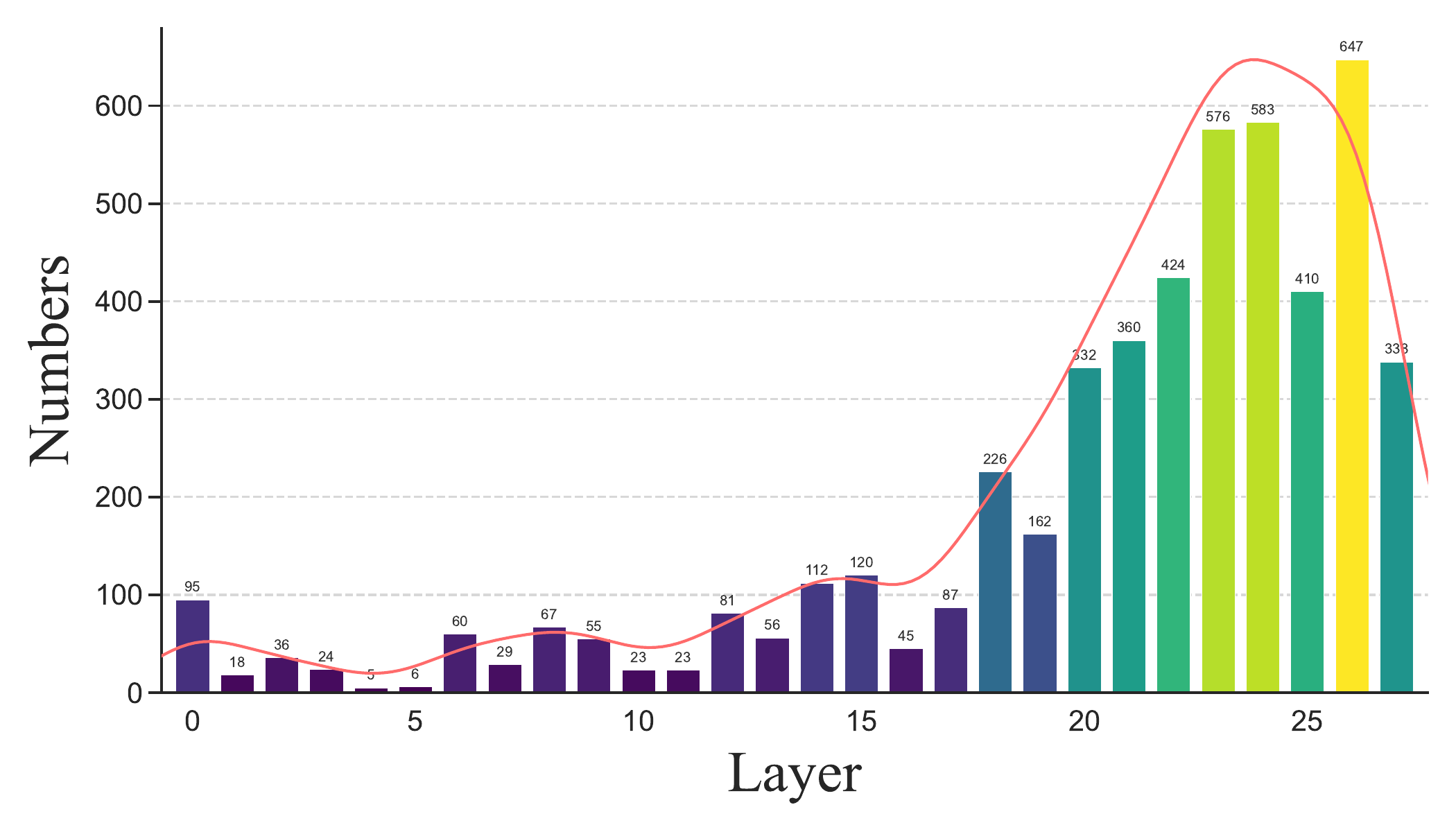} 
    \end{subfigure}
    \begin{subfigure}[b]{0.24\textwidth}
        \centering
        \includegraphics[width=\linewidth]{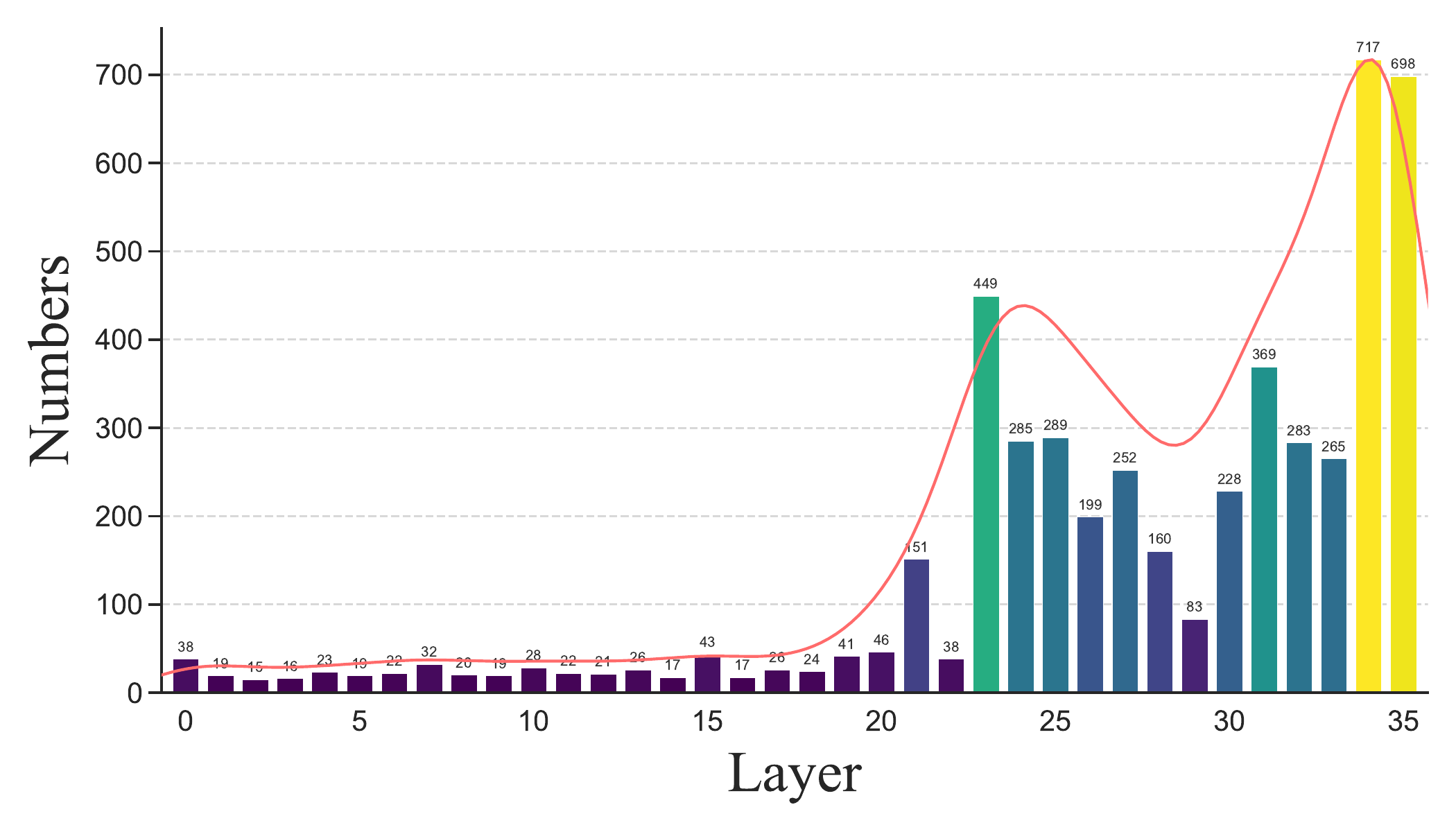}
    \end{subfigure}
    \begin{subfigure}[b]{0.24\textwidth}
        \centering
        \includegraphics[width=\linewidth]{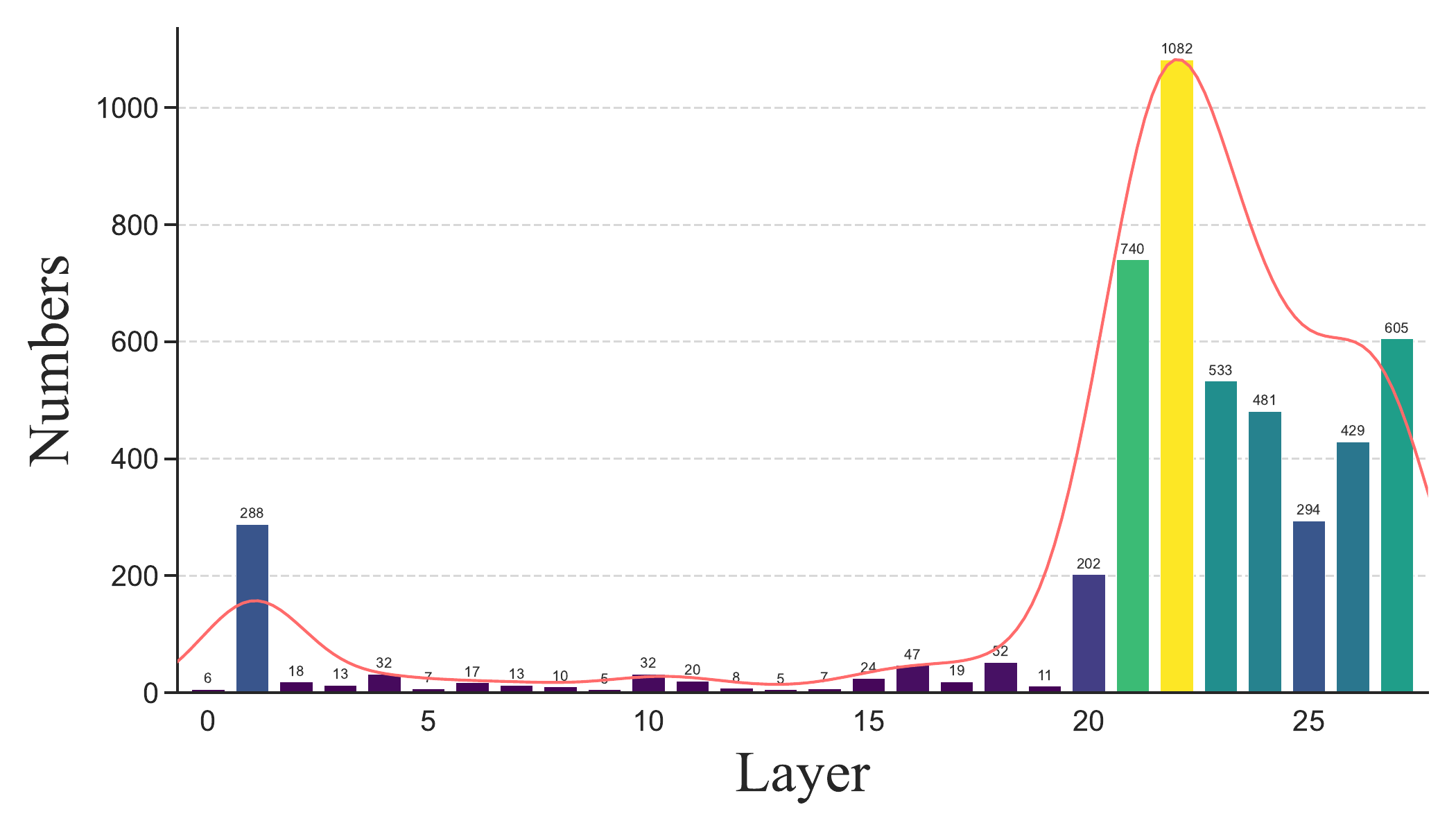}
    \end{subfigure}
    \begin{subfigure}[b]{0.24\textwidth}
        \centering
        \includegraphics[width=\linewidth]{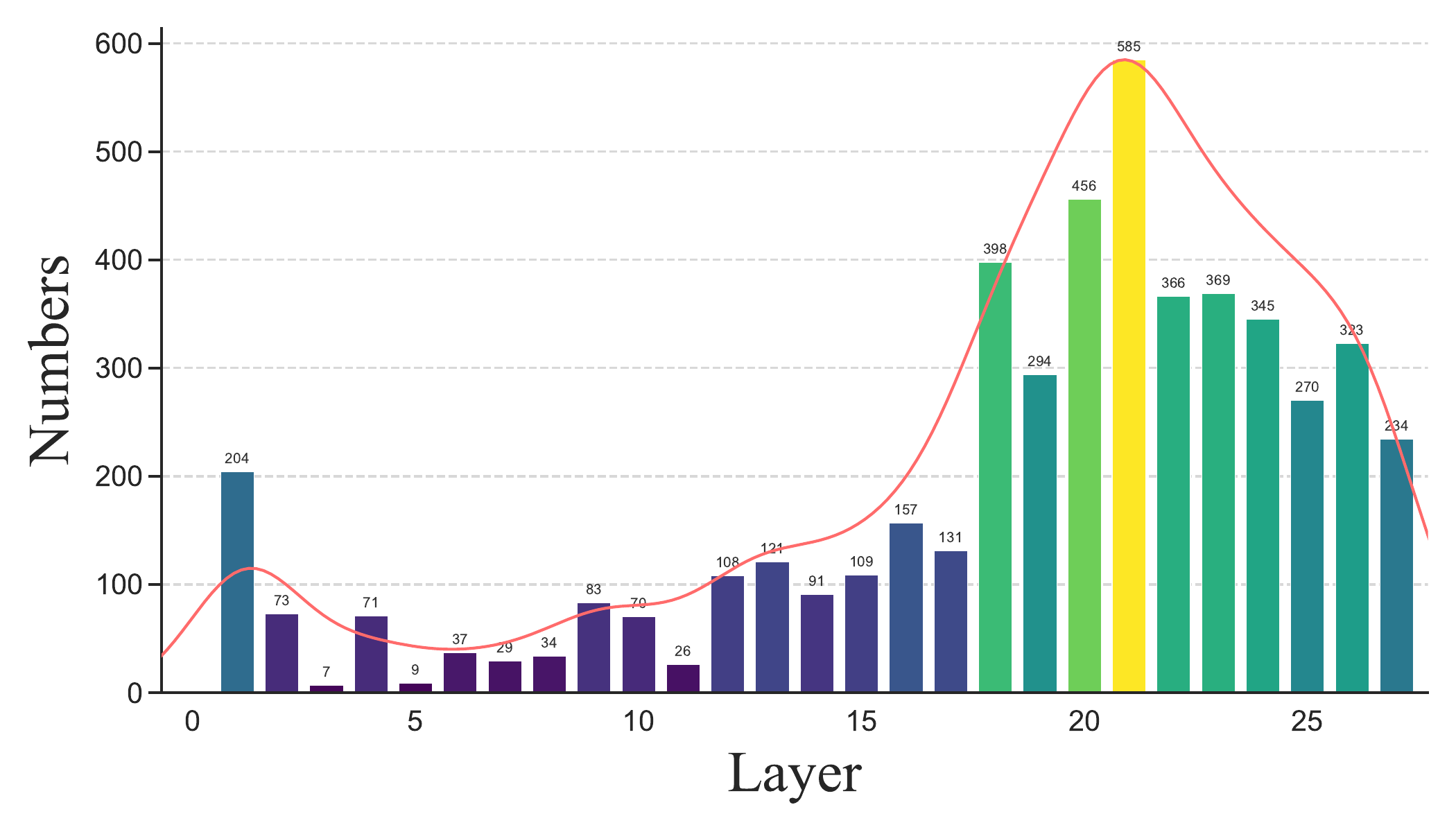}
    \end{subfigure}
    
    \vspace{0.5em} 
    
    \begin{subfigure}[b]{0.24\textwidth}
        \centering
        \includegraphics[width=\linewidth]{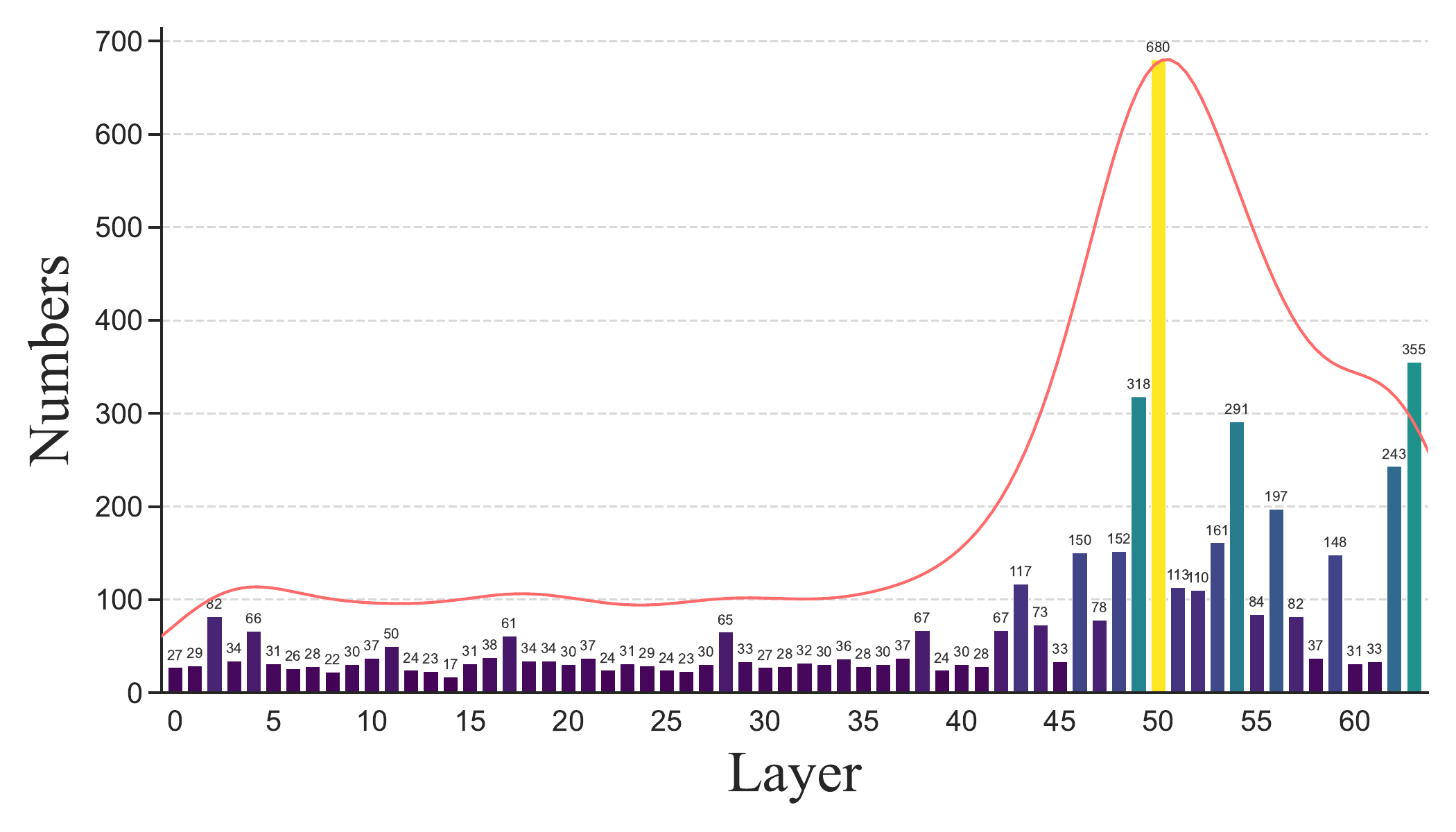}
    \end{subfigure}
    \begin{subfigure}[b]{0.24\textwidth}
        \centering
        \includegraphics[width=\linewidth]{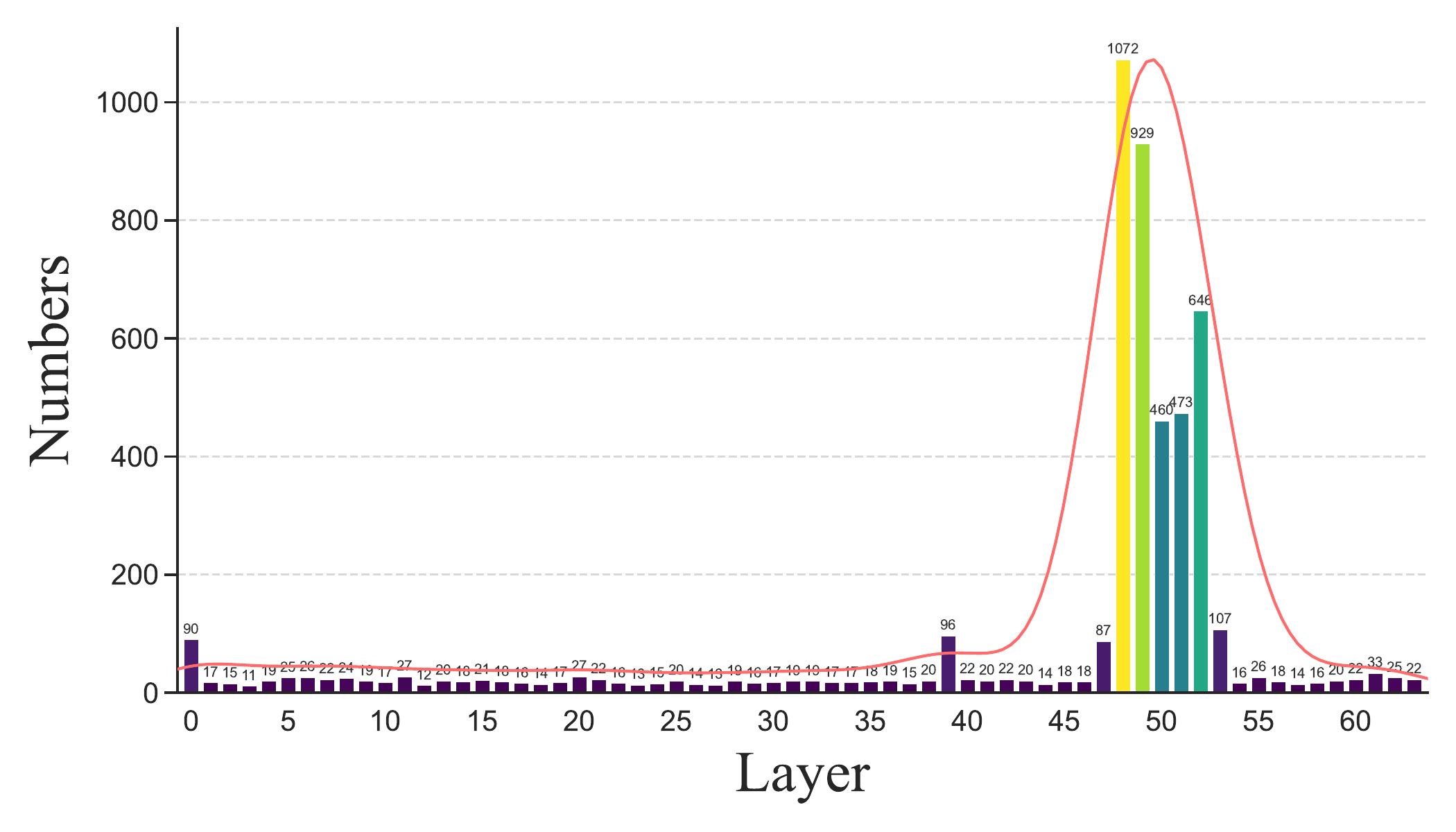}
    \end{subfigure}
    \begin{subfigure}[b]{0.24\textwidth}
        \centering
        \includegraphics[width=\linewidth]{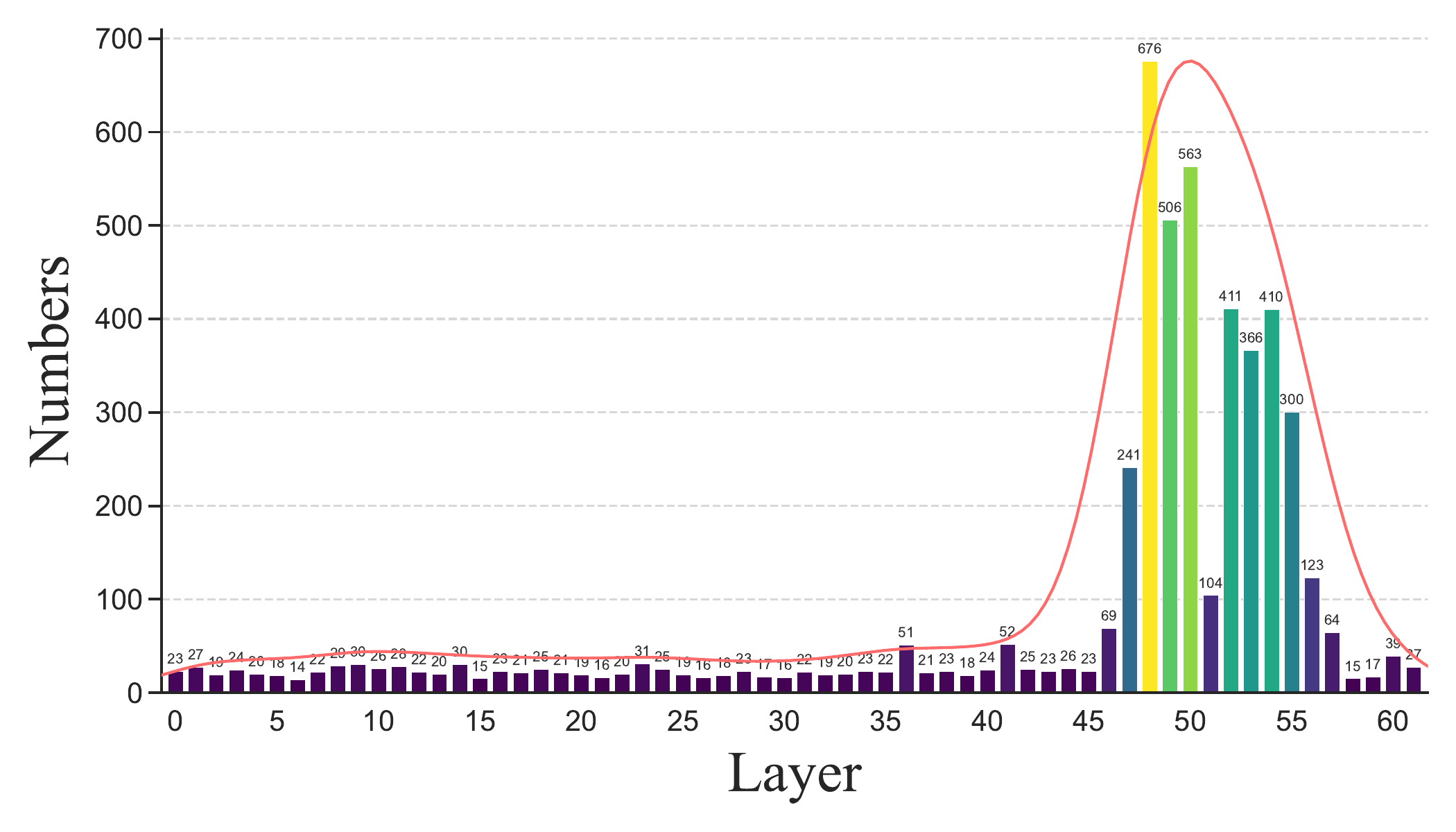}
    \end{subfigure}
    \begin{subfigure}[b]{0.24\textwidth}
        \centering
        \includegraphics[width=\linewidth]{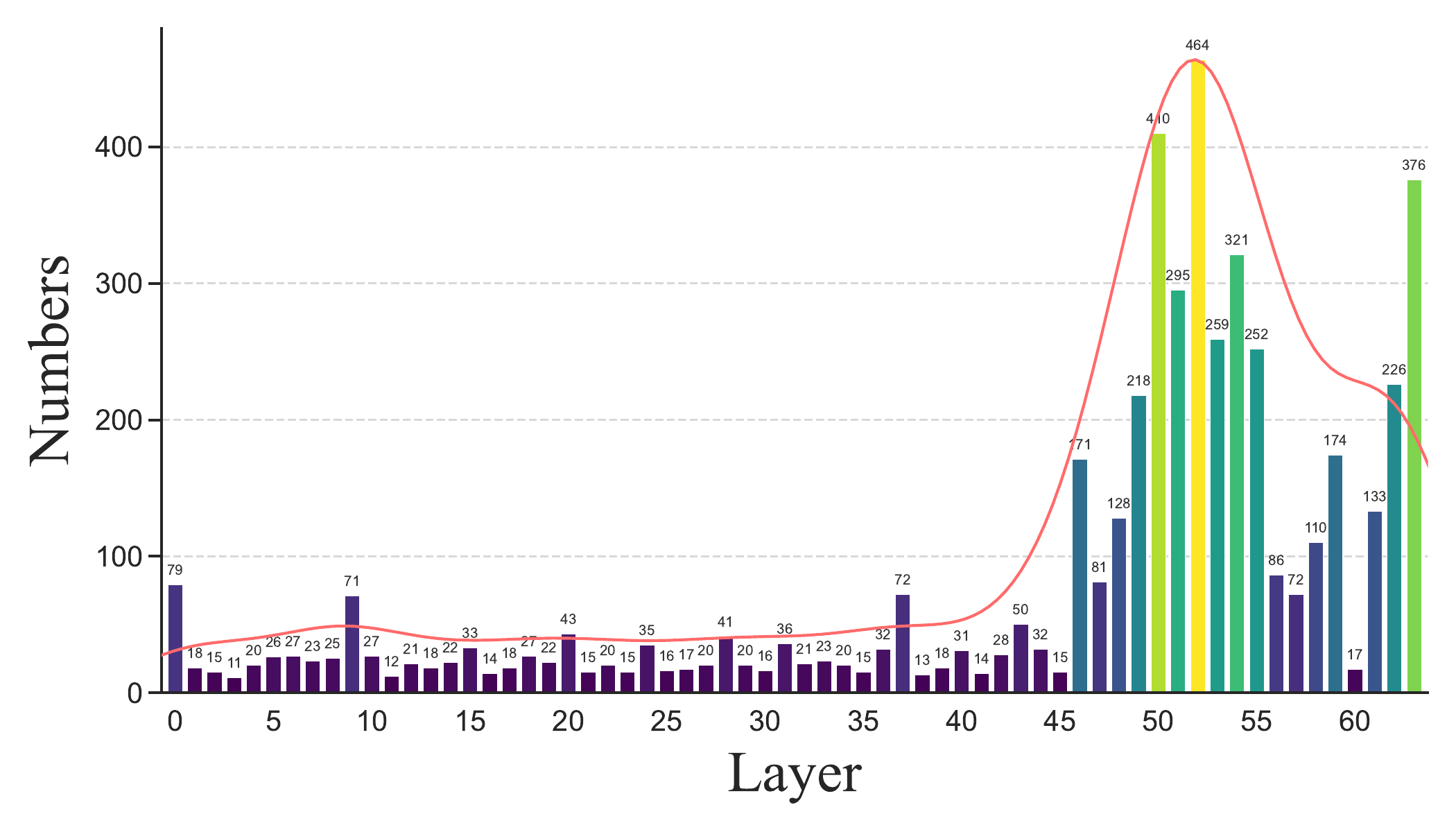}
    \end{subfigure}
    
    \caption{Layer-wise distribution of optimal steering layers of Qwen3-VL. The plots are organized by concept category (columns, left to right: entity, visual style, emotion, and abstract concept) and model scale (rows: top for Qwen3-VL-8B, bottom for Qwen3-VL-32B).}
    \label{fig:app_qwen3}
\end{figure*}

\begin{figure*}[h!]
    \centering

    \begin{subfigure}[b]{0.24\textwidth}
        \centering
        \includegraphics[width=\linewidth]{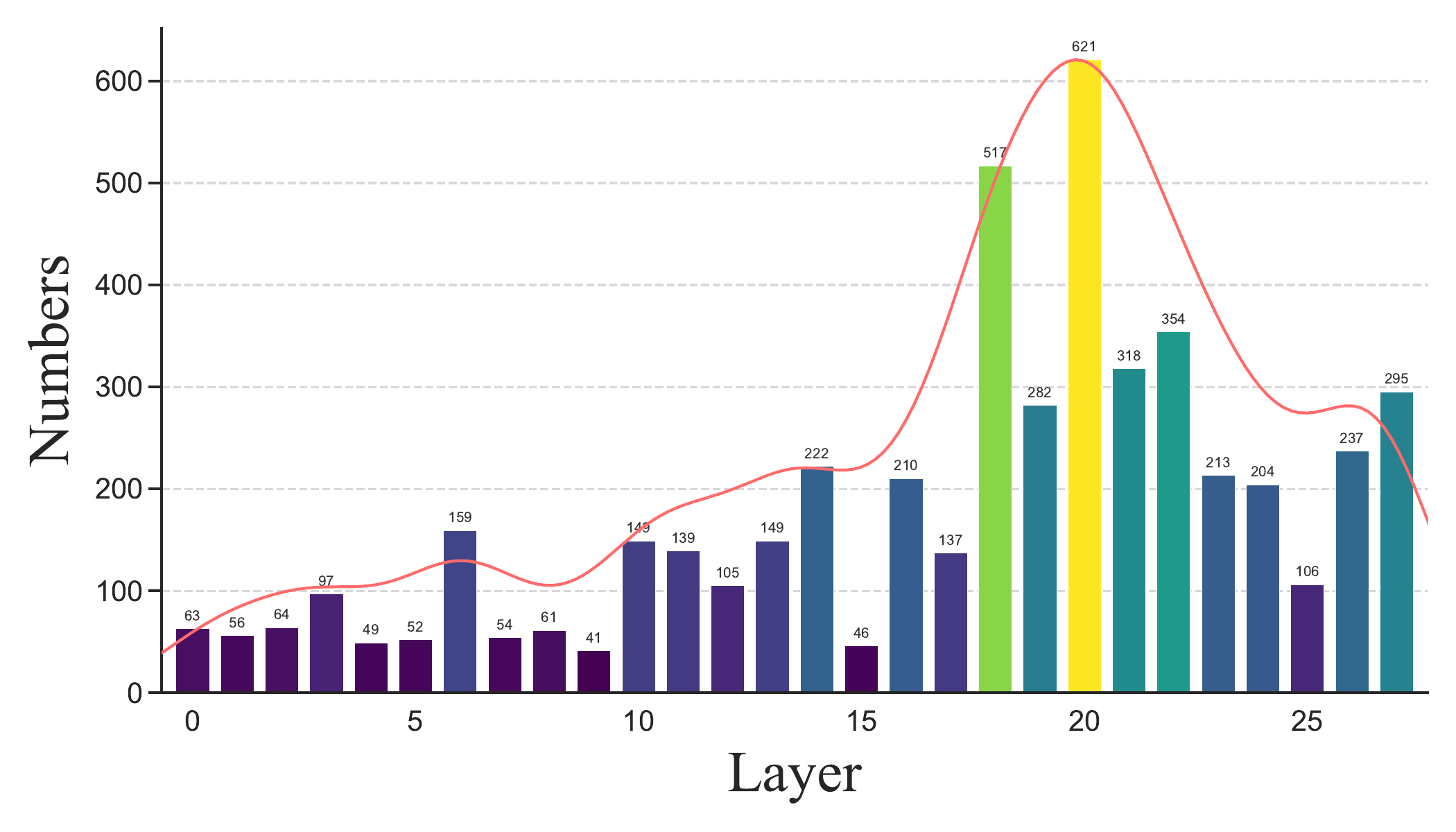}
    \end{subfigure}
    \begin{subfigure}[b]{0.24\textwidth}
        \centering
        \includegraphics[width=\linewidth]{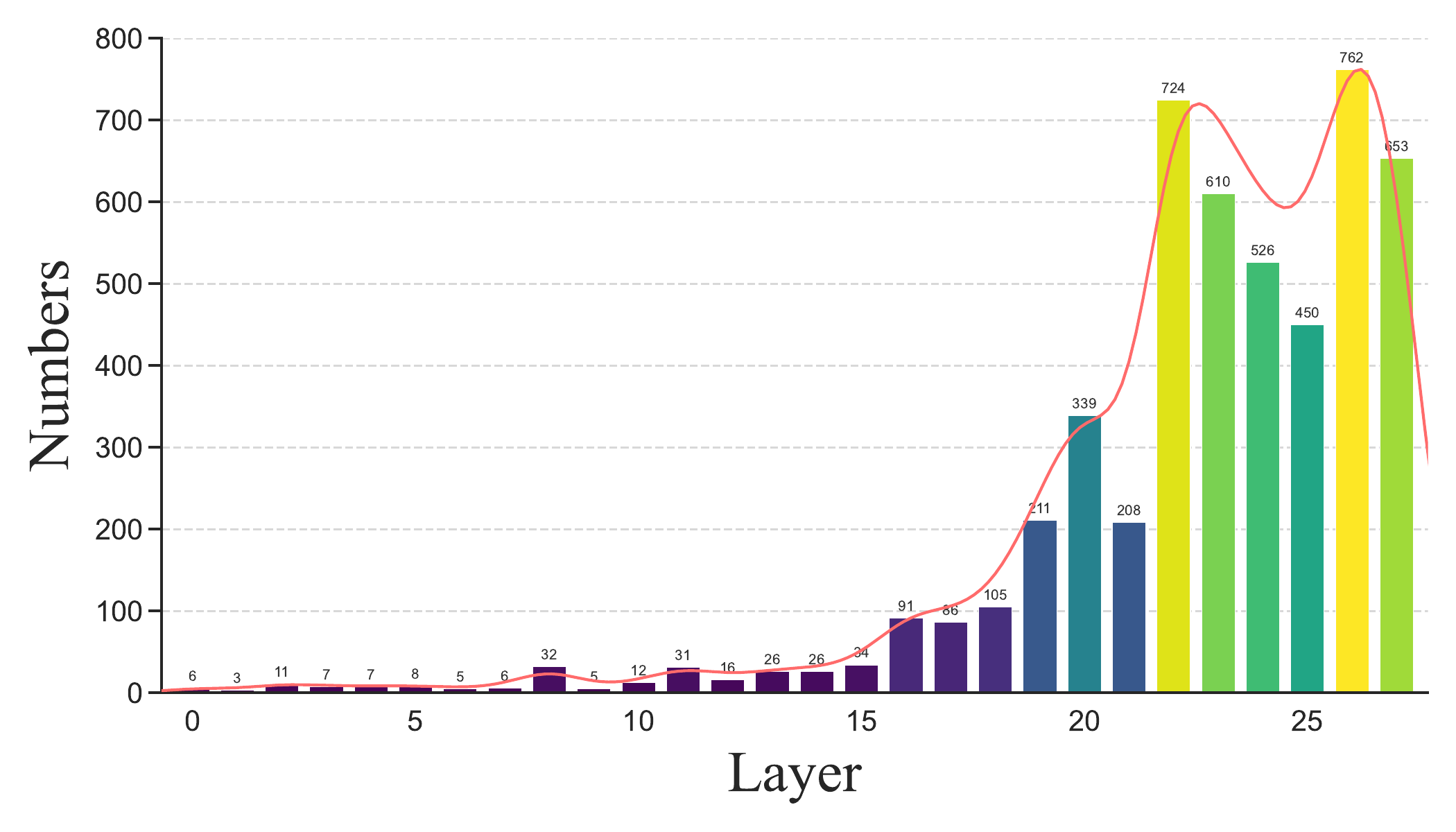}
    \end{subfigure}
    \begin{subfigure}[b]{0.24\textwidth}
        \centering
        \includegraphics[width=\linewidth]{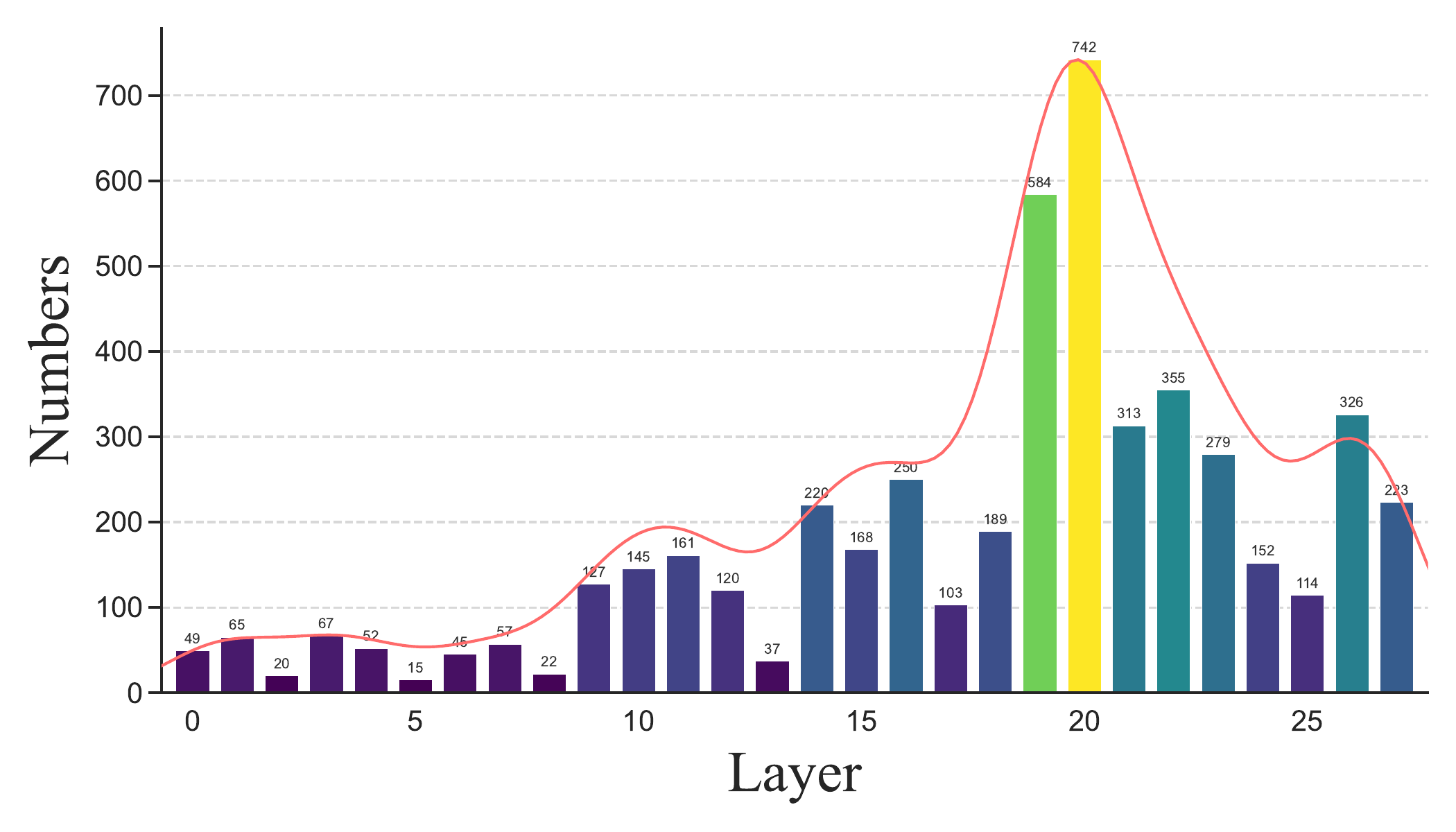}
    \end{subfigure}
    \begin{subfigure}[b]{0.24\textwidth}
        \centering
        \includegraphics[width=\linewidth]{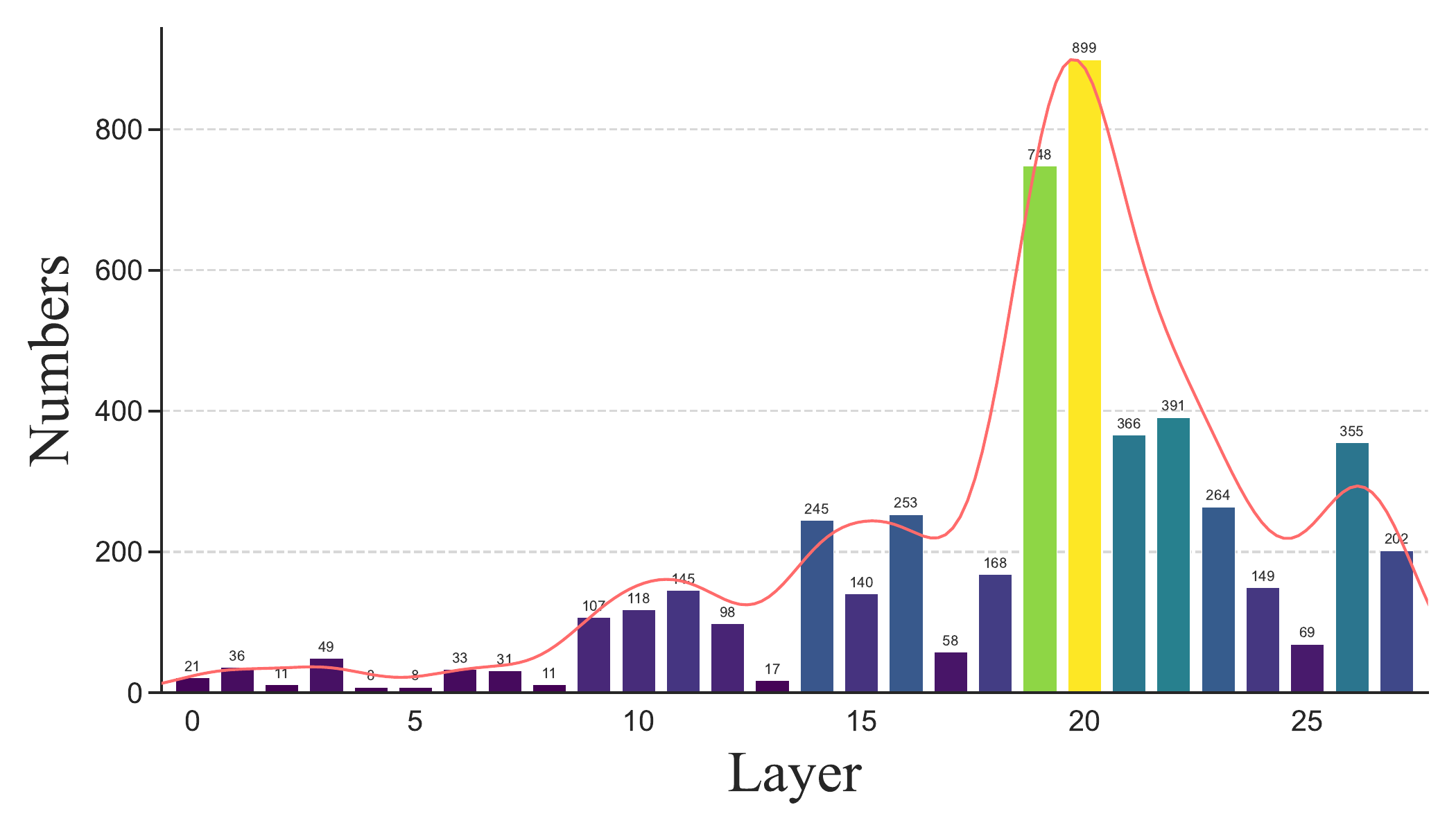}
    \end{subfigure}
    
    \caption{Layer-wise distribution of optimal steering layers of Qwen2.5-VL-7B. The plots are organized by concept category (left to right: entity, visual style, emotion, and abstract concept). }
    \label{fig:app_qwen2.5}
\end{figure*}

\begin{figure*}[h!]
    \centering
    \begin{subfigure}[b]{0.24\textwidth}
        \centering
        \includegraphics[width=\linewidth]{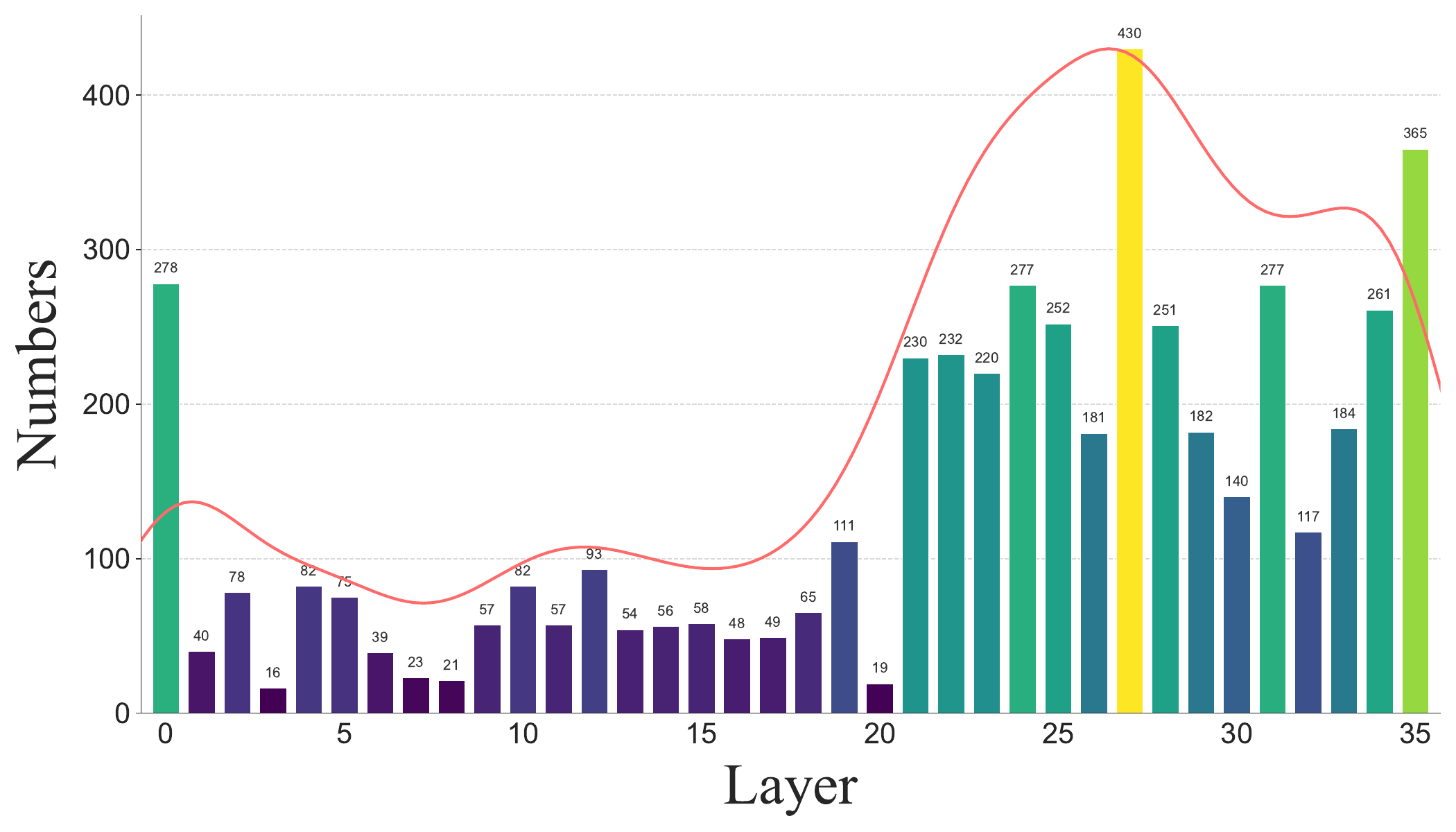}
    \end{subfigure}
    \begin{subfigure}[b]{0.24\textwidth}
        \centering
        \includegraphics[width=\linewidth]{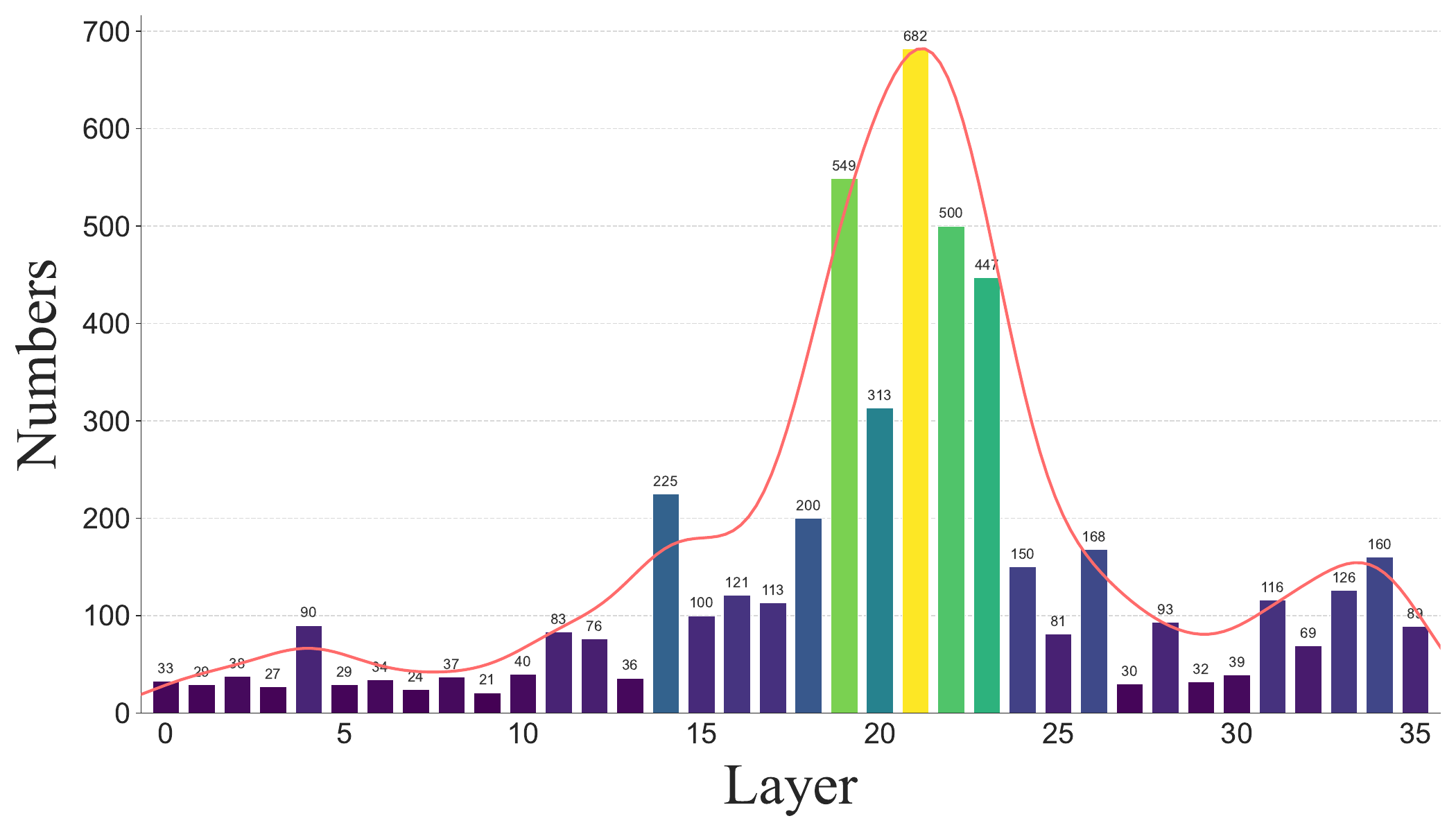}
    \end{subfigure}
    \begin{subfigure}[b]{0.24\textwidth}
        \centering
        \includegraphics[width=\linewidth]{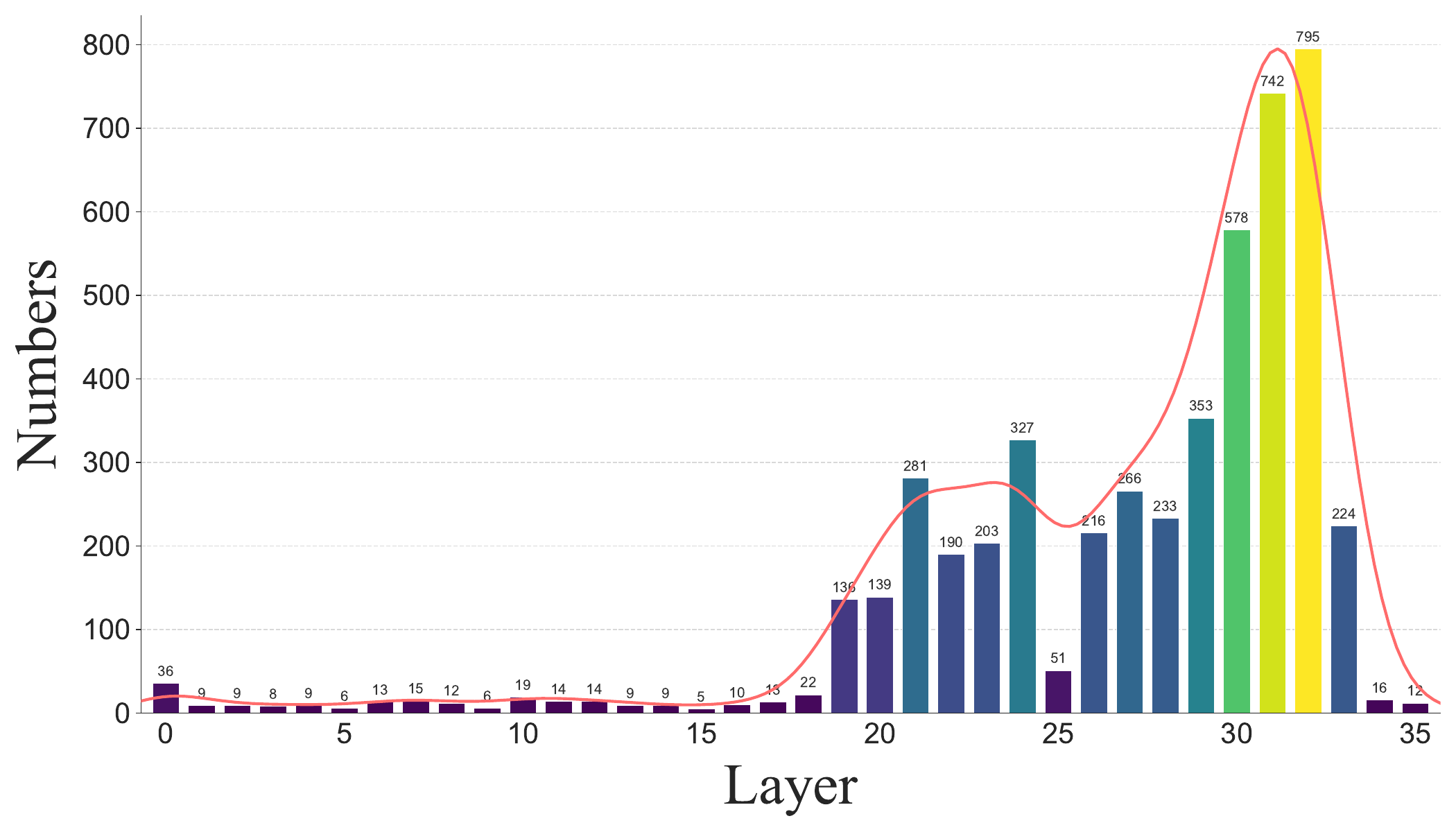}
    \end{subfigure}
    \begin{subfigure}[b]{0.24\textwidth}
        \centering
        \includegraphics[width=\linewidth]{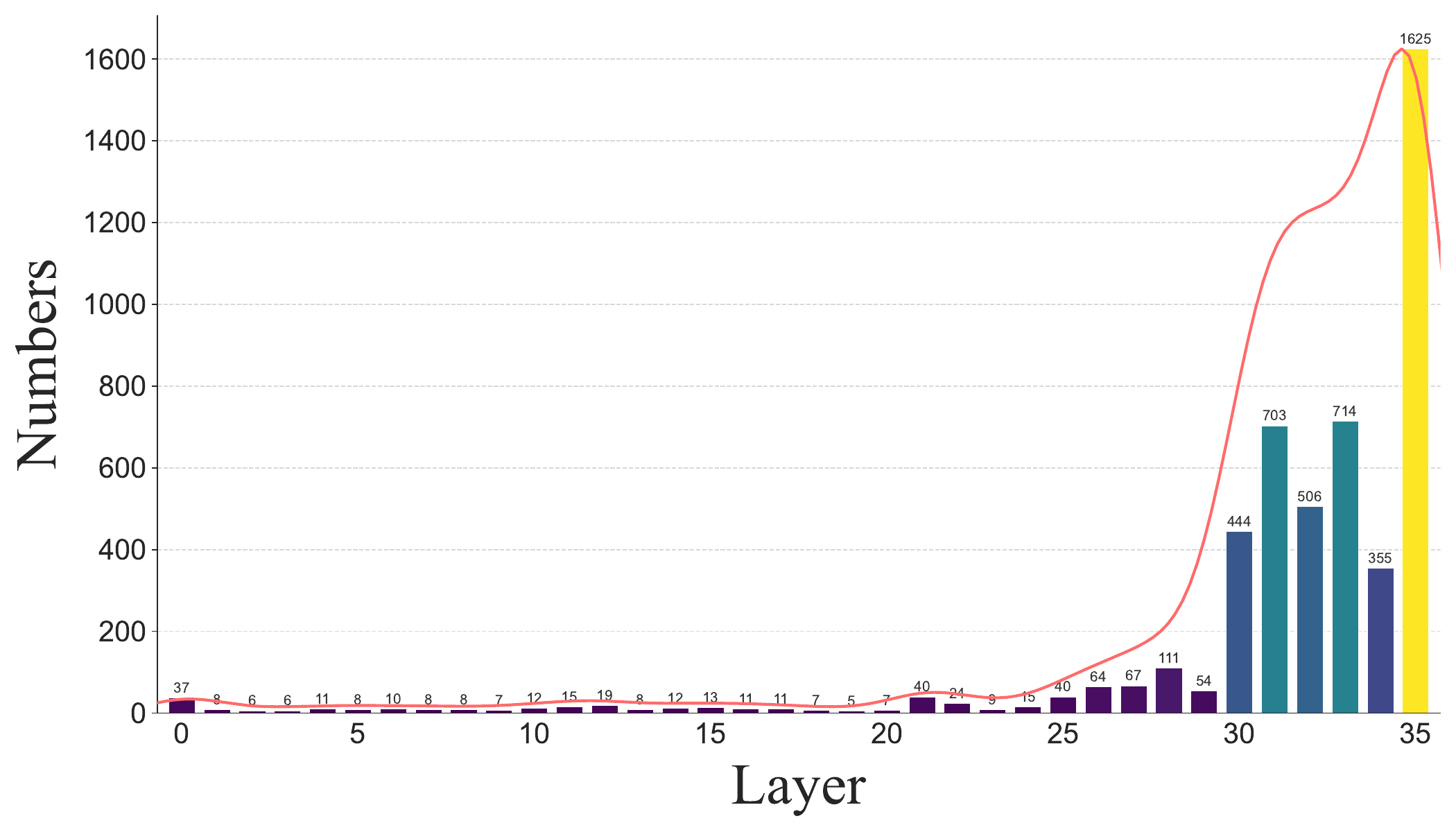}
    \end{subfigure}
    
    \caption{Layer-wise distribution of optimal steering layers of LLaVA-OneVision-1.5-8B. The plots are organized by concept category (left to right: entity, visual style, emotion, and abstract concept). }
    \label{fig:app_llava}
\end{figure*}

\subsection{Discussion of Bimodality}
\label{app:2peak}

For bimodality (layer 19 and layer 33) shown in \Cref{fig:distribution} in the entity, we suspect that different entities are causing them to be significantly activated in different layers. We divided them into two batches of samples, one with a peak at layer 19 and the other with a peak at layer 33. Firstly, we examined the target words corresponding to two sets of samples. Through \Cref{fig:app_2peak_word} of word embeddings, we found that the word vectors of the two groups highly overlap in the semantic space and do not form a clear clustering separation. This means that the division of labor between Layer 19 and Layer 33 is not based on simple semantic categories. The model does not allocate processing levels according to the dictionary definition of nouns.
\begin{figure}[h!]
    \centering
    \includegraphics[width=0.6\linewidth]{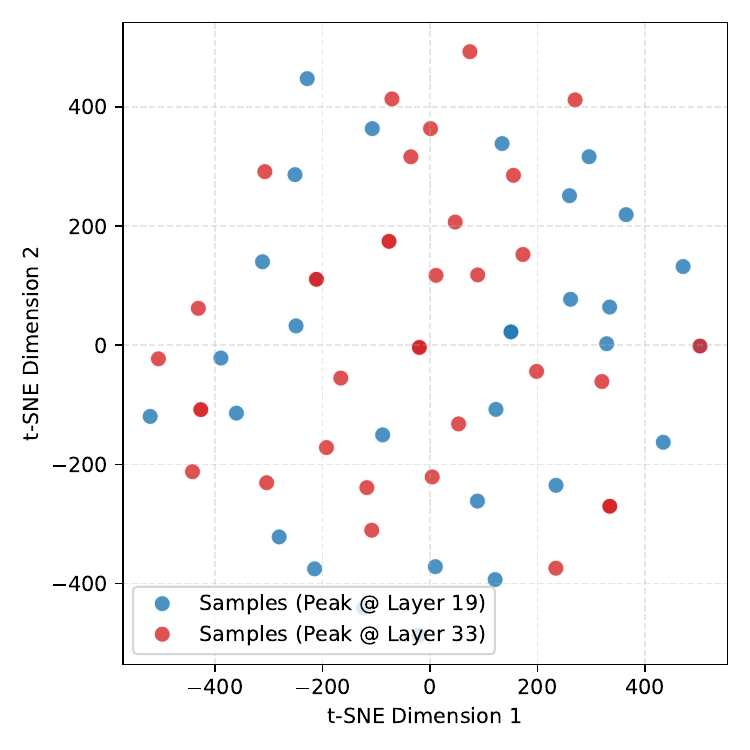}
    \caption{Word embeddings of two batches of samples.}
    \label{fig:app_2peak_word}
\end{figure}

We then suspect that there are significant differences in the visual concept vectors extracted from the images. We extracted the internal concept vectors of these two sets of image samples at the corresponding levels for visualization, as shown in \Cref{fig:app_2peak_vector}. However, the results show that even in the activation space within the model, these two sets of vectors still do not exhibit significant spatial separation. That is to say, these samples are not visually significantly different, which leads to their activation in different layers.

\begin{figure}[h!]
    \centering
    \includegraphics[width=\linewidth]{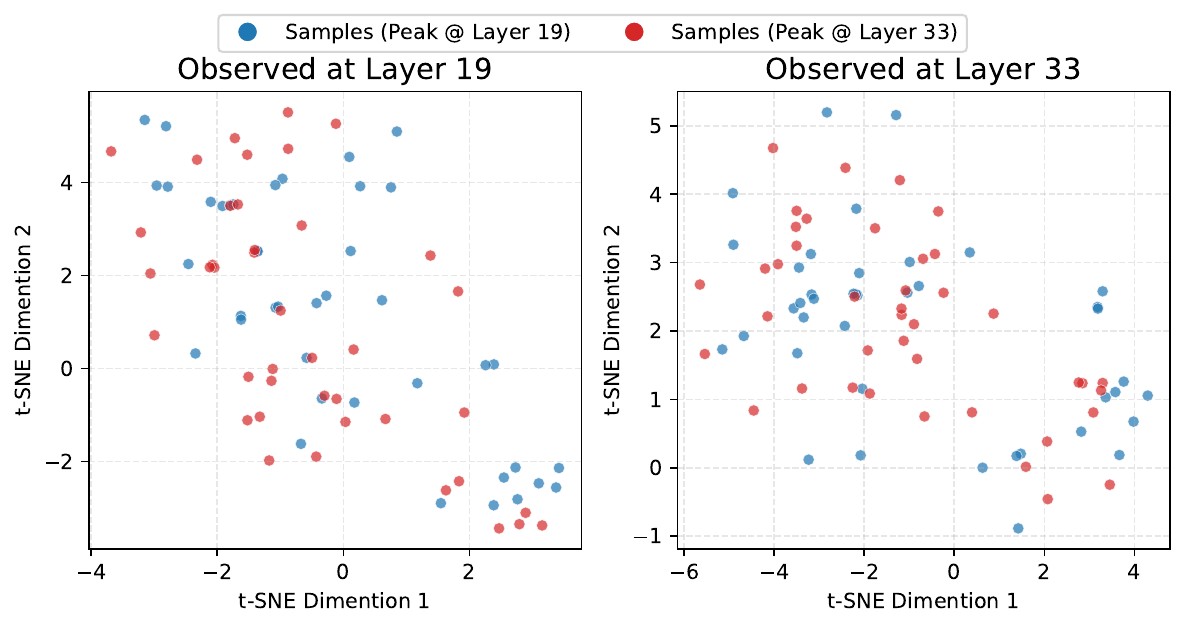}
    \caption{Concept vectors of two batches of samples of different layers.}
    \label{fig:app_2peak_vector}
\end{figure}

We also suspect that the appearance of double peaks may be due to the influence of noise. That is to say, the similarity between these two groups is high in both the 19th and 33rd layers, but the small difference results in their numbers being evenly matched. We calculated the average similarity of each layer of these two sets of samples, as shown in the \Cref{fig:app_2peak_sim}. This eliminates the possibility of noise. These two groups of samples showed significant differences between the 19th and 33rd layers. However, the figure reveals another piece of information. The samples whose peak is at the 19th layer (blue line) present high performance in the middle layer, but significantly degrade in the deeper layers. This suggests these entities are treated as transient visual signals: once grounded by the perception module, they are discarded or compressed by the reasoning module to free up capacity for high-level inference. In contrast, the samples that peak at 33rd (red line) maintain a superior similarity score throughout the layers, culminating in a high final peak. This indicates a process of semantic accumulation. The model actively refines and reinforces their representation layer by layer, preventing the signal decay observed in the mid-peak group. In conclusion, the bimodal distribution of eneity is not a classification of word or visual concepts, but a mechanism of information management within MLLMs.

\begin{figure}[h!]
    \centering
    \includegraphics[width=0.9\linewidth]{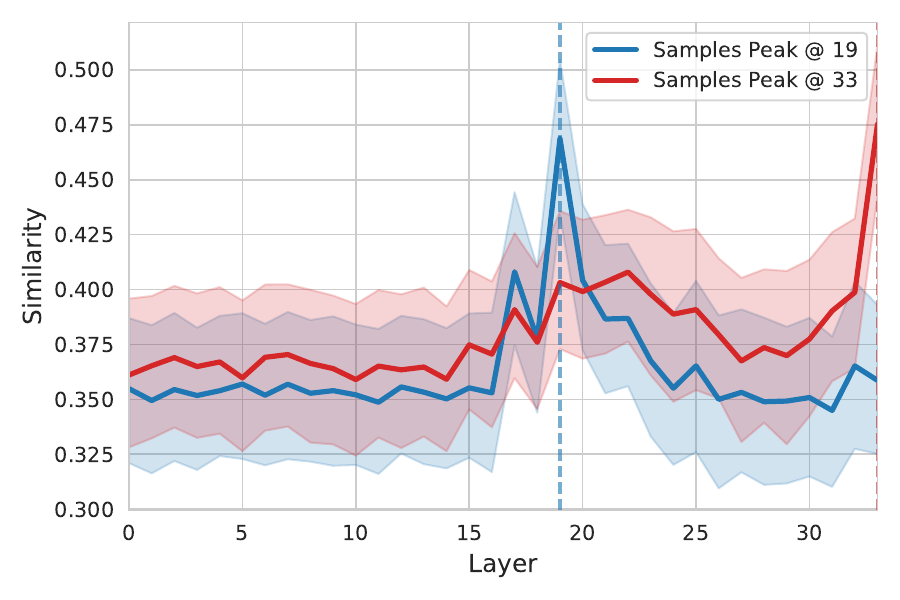}
    \caption{Similarity by layer of two batches of samples.}
    \label{fig:app_2peak_sim}
\end{figure}

\subsection{Context Dependency Analysis}
\label{app:context}
To investigate whether the efficacy of activation steering depends on the visual context, we applied the concept injection to three distinct types of image substrates: (i) Negative samples: images with natural scenes but lacking the target concept (serving as a semantic baseline). (ii) Pure white images: images with zero visual information. (iii) Gaussian noise images: images consisting entirely of random pixel noise.

\begin{figure}[h!]
    \centering
    \includegraphics[width=0.9\linewidth]{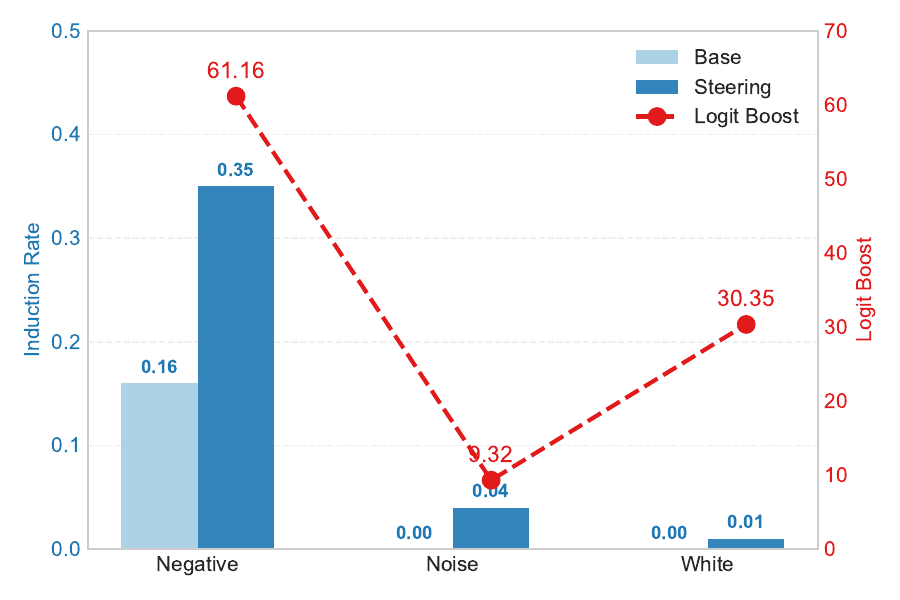}
    \caption{Context Dependency Analysis.}
    \label{fig:app_context}
\end{figure}

As illustrated in \Cref{fig:app_context}, steering interventions within a rich scene context (i.e., negative samples) yield significantly higher efficacy compared to context-free baselines, suggesting that the internal encoding of visual concepts is context-dependent rather than isolated. Furthermore, we observe a distinct divergence between the Logit Boost and the Success Rate in the noise and white image settings.

Specifically, the pure white background elicited the highest Logit Boost; in the absence of conflicting visual features, the signal-to-noise ratio of the injected vector reached its maximum. However, this intense internal activation failed to translate into explicit generation. We attribute this to an intrinsic visual verification mechanism within the language decoder: the semantic void of the white background creates an extreme contextual conflict with the entity concept. Consequently, the model's safety priors suppress this "hallucination" due to the complete lack of visual evidence, despite the high latent probability.

In contrast, while the noise background suppressed the magnitude of the Logit Boost, its Success Rate was anomalously higher than that of the white background. We ascribe this to visual pareidolia, where the model tends to "misinterpret" random noise textures as spurious visual anchors for the injected concept. These findings indicate that visual generation is not merely a function of activation magnitude but critically depends on scene compatibility. An injected concept requires a legitimate syntactic slot to bypass verification mechanisms and materialize in the final output.

\subsection{Fine-grained Emotion Concept Analysis}
\label{app:emo}
In our main experiments, emotion steering demonstrated remarkable efficacy in modulating the affective tone of generated descriptions. To further investigate the underlying mechanism and rigorously validate that these steering vectors encode fine-grained emotional categories rather than broad sentiment polarities, we computed an Emotion Confusion Matrix. As illustrated in \Cref{fig:app_fine}, rows correspond to the injected emotion vectors, while columns represent the Logit Boost of tokens associated with each specific emotion category.

\begin{figure}[h!]
    \centering
    \includegraphics[width=\linewidth]{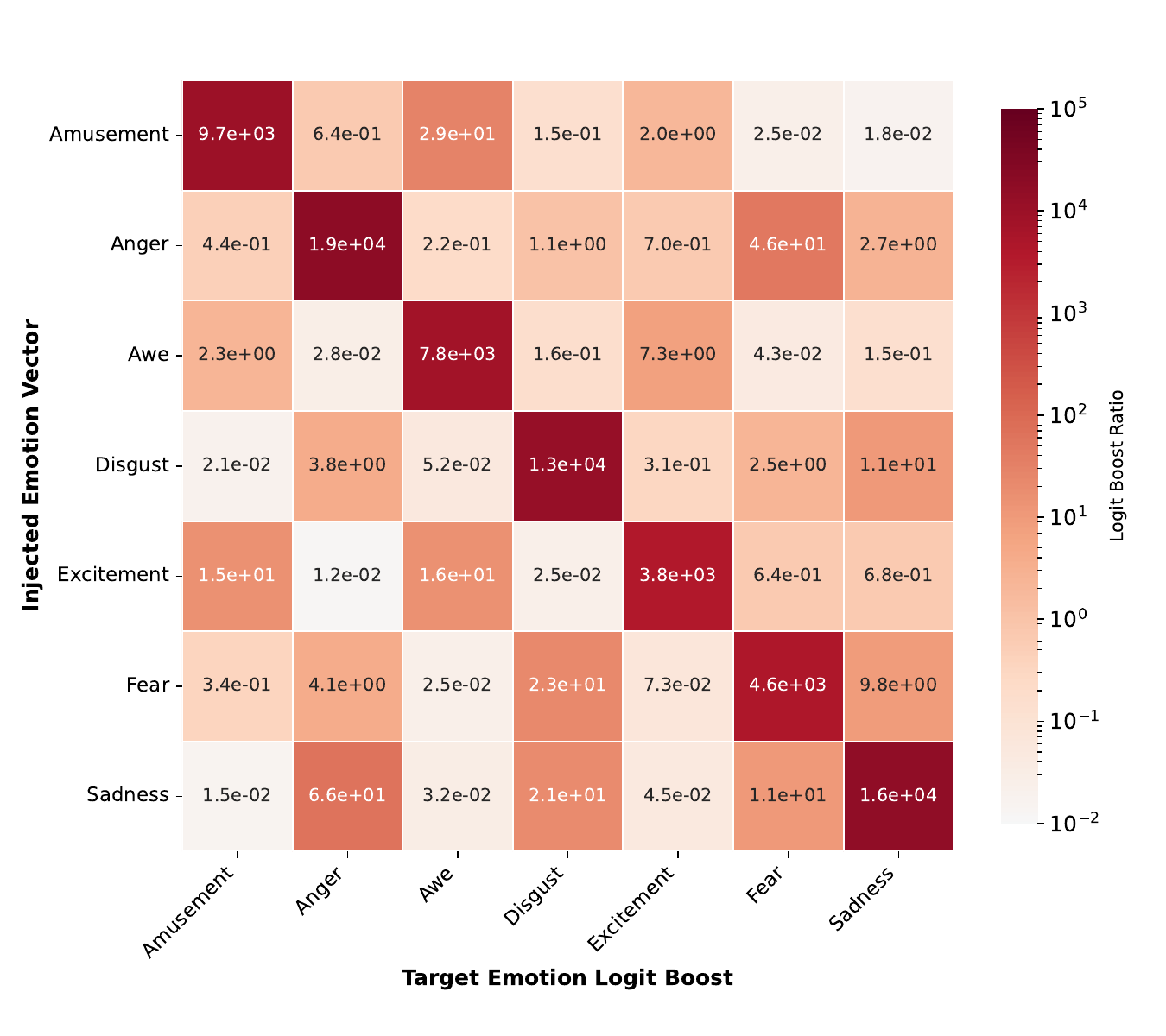}
    \caption{Emotion Confusion Matrix.}
    \label{fig:app_fine}
\end{figure}

The heatmap exhibits a distinct diagonal structure, where each steering vector maximally activates its corresponding semantic vocabulary. This observation confirms the high selectivity of the extracted emotion concepts. Simultaneously, distinct emotions demonstrate significant orthogonality. Notably, different negative emotions, such as anger, disgust, and fear, exhibit minimal cross-activation. This implies that the MLLM decomposes these concepts into nearly orthogonal subspaces rather than collapsing them into a single ``negative emotion" cluster. Overall, these patterns reveal that the MLLM possesses a high-dimensional and structured emotional latent space, allowing for precise, multi-axis affective modulation without semantic entanglement.

\section{Use of AI Assistants}

We utilized AI assistants (e.g., GPT, Gemini) for language polishing and grammar correction. Additionally, AI tools were used to assist with formatting the LaTeX document.

\section{Case Study}
\label{app:case}

\begin{figure*}[h!]
    \centering
    \includegraphics[width=\linewidth,page=1]{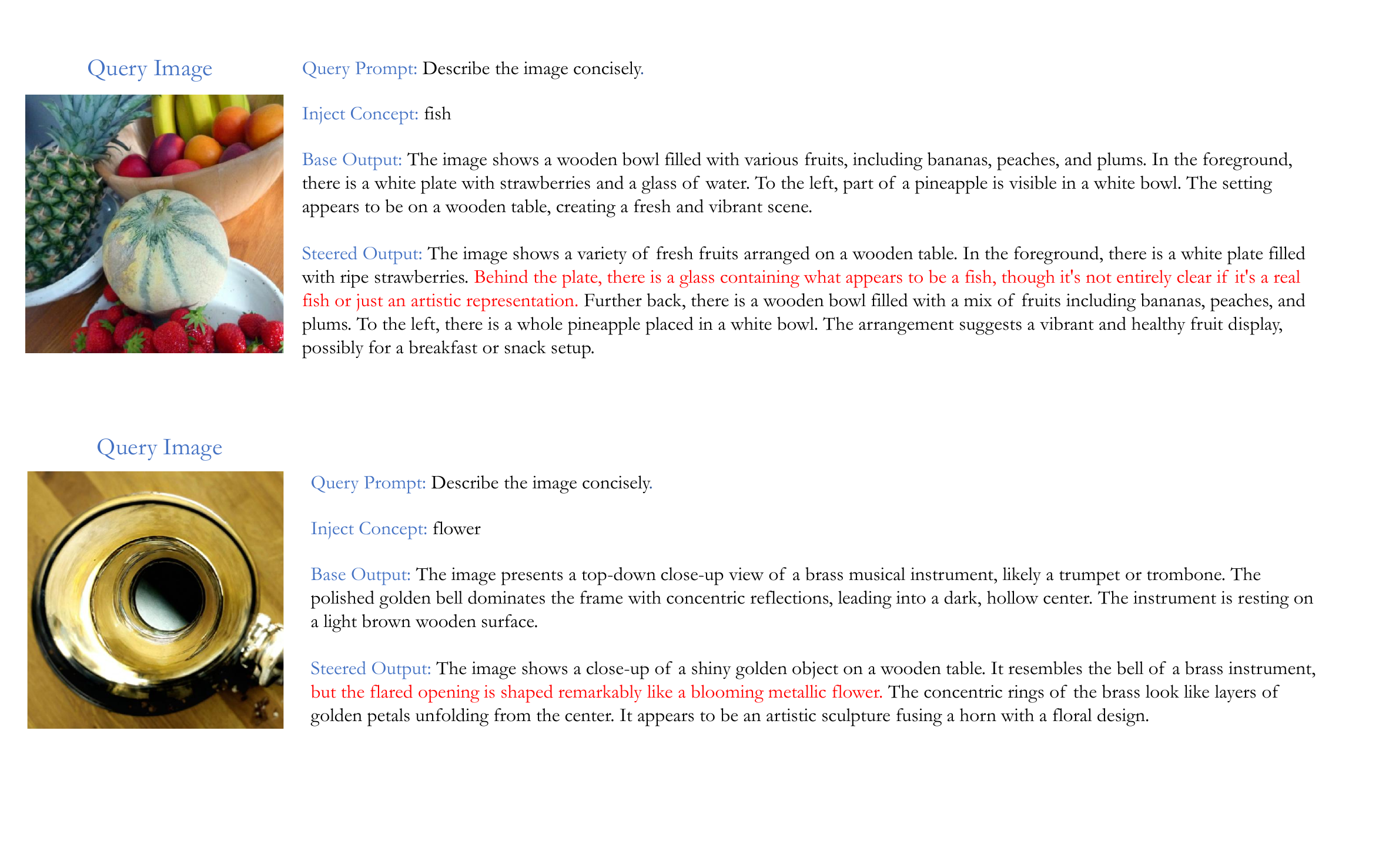}
    \caption{Case of activation steering. (Entity)}
    \label{fig:case1}
\end{figure*}
\begin{figure*}[h!]
    \centering
    \includegraphics[width=\linewidth,page=2]{fig/app/case.pdf}
    \caption{Case of activation steering. (Style)}
    \label{fig:case2}
\end{figure*}
\begin{figure*}[h!]
    \centering
    \includegraphics[width=\linewidth,page=3]{fig/app/case.pdf}
    \caption{Case of activation steering. (Emotion)}
    \label{fig:case3}
\end{figure*}
\begin{figure*}[h!]
    \centering
    \includegraphics[width=\linewidth,page=4]{fig/app/case.pdf}
    \caption{Case of activation steering. (Abstract concepts)}
    \label{fig:case4}
\end{figure*}

\end{document}